\DeclareMathOperator{\argmin}{argmin}
\DeclareMathOperator{\poly}{poly}
\newcommand{\RR}{\mathbb{R}}
\newcommand{\EE}{\mathbb{E}}
\newcommand{\PP}{\mathbb{P}}
\renewcommand{\SS}{\mathbb{S}}
\newcommand{\cH}{\mathcal{H}}
\newcommand{\cP}{\mathcal{P}}
\newcommand{\cF}{\mathcal{F}}
\newcommand{\cB}{\mathcal{B}}
\newcommand{\cR}{\mathcal{R}}
\newcommand{\cX}{X}
\newcommand{\bx}{\mathbf{x}}
\newcommand{\bu}{\bm{u}}
\newcommand{\bb}{w}
\begin{document}
 \title{A Priori Estimates  of the Population Risk \\for Two-layer Neural Networks}


\author{
  Weinan E \thanks{Department of Mathematics and Program in Applied and Computational  Mathematics, Princeton University, Princeton NJ 08540, USA; Beijing Institute of Big Data Research, Beijing 100871, China (weinan@math.princeton.edu).}
  \and Chao Ma \thanks{Program in Applied and Computational Mathematics, Princeton University, Princeton NJ 08544, USA (chaom@princeton.edu).}
  \and Lei Wu \thanks{Program in Applied and Computational Mathematics, Princeton University, Princeton NJ 08544, USA (leiwu@princeton.edu)}
}

\pagestyle{myheadings} \markboth{A PRIORI ESTIMATES FOR TWO-LAYER NEURAL NETWORKS}{} 
\maketitle

\begin{center}
{\it In memory of Professor David Shenou Cai}
\end{center}


\begin{abstract}
New estimates for the population risk are established for  two-layer neural networks. These estimates are nearly optimal in the sense
that the error rates scale in the same way as the Monte Carlo error rates. They are  equally effective in the over-parametrized regime when the network size is much larger than the size of the dataset.  These new estimates are a priori in nature in the sense that the bounds depend only on some norms of the underlying functions to be fitted,  not the parameters in the model, in contrast with  most existing results which are a posteriori in nature.  Using these a priori estimates, we provide a perspective for understanding why two-layer neural networks  perform better than {the related} kernel methods.
\end{abstract}

\begin{keywords}  Two-layer neural network; Barron space; Population risk; A priori estimate; Rademacher complexity
\end{keywords}

 \begin{AMS} 
41A46; 41A63; 62J02; 65D05
\end{AMS}

\section{Introduction} 

One of the main challenges in theoretical machine learning is to understand the errors in neural network models \cite{zhang2016understanding}.
To this end, it is useful to draw an analogy with 
classical approximation theory and finite element analysis \cite{ciarlet2002finite}.
There are two kinds of error bounds in finite element analysis depending on whether the target solution (the ground truth) or the numerical
solution enters into the bounds. Let $f^*$ and $\hat{f}_n$ be the true solution and the ``numerical solution'', respectively. ``A priori'' error estimates usually take the form

\[
\|\hat{f}_n - f^*\|_1 \leq C n^{-\alpha}\|f^*\|_2.
\]
where only norms of the true solution enter into the bounds.
In ``a posteriori'' error estimates,  the norms of the numerical solution enter into the bounds:
\[
\|\hat{f}_n - f^*\|_1 \leq C n^{-\beta} \|\hat{f}_n\|_3.
\]
Here $\|\cdot\|_1, \|\cdot\|_2, \|\cdot\|_3$ denote various norms.
In this language,  most recent theoretical results~\cite{neyshabur2015norm,bartlett2017nips,pmlr-v75-golowich18a,neyshabur2017,neyshabur2018a,neyshabur2018towards} on
estimating the generalization error of neural networks should  be viewed as ``a posteriori'' analysis, since the bounds depend on various norms of the neural network model obtained after the training process. 
As was observed in \cite{dziugaite2017computing,arora2018stronger,neyshabur2018towards}, the numerical values of these norms 
are very large, yielding vacuous bounds. For example, \cite{neyshabur2018towards} calculated the  values of various a posteriori bounds for some real two-layer neural networks and it is found that the best bounds are still on the order of $O(10^5)$.



In this paper, we  pursue a different line of attack by providing ``a priori'' analysis. Specifically,  we focus on  two-layer networks, and 
we consider models with  explicit regularization.
 We establish estimates for the population risk  which are  asymptotically sharp with constants depending only on the properties of the target function. 
 Our numerical results  suggest that such regularization terms are necessary in order for the model to be
``well-posed'' (see Section \ref{sec: discuss} for the precise meaning). 

Specifically, our main contributions are:
\begin{itemize}
  \item We establish  a priori estimates of the population risk for learning two-layer neural networks with an explicit regularization. These a priori estimates depend on the Barron norm of the target function. The rates with respect to the number of parameters and number of samples are comparable to the Monte Carlo rate. In addition, our estimates hold for high dimensional and over-parametrized regime.  
  \item We make a comparison 
between the neural network and kernel methods  using these a priori estimates. We show that two-layer neural networks can be understood as 
kernel methods with the kernel adaptively selected from the data. 
This understanding partially explains why neural networks perform better than kernel methods in practice. 
\end{itemize}

The present paper is the first in a series of papers in which we analyze neural network models using a classical numerical
analysis perspective.  Subsequent papers will consider deep neural network models \cite{ma2019priori, ma2019analysis},
the optimization and implicit regularization problem using gradient descent dynamics \cite{leiwu2019two,ma2019analysis}
and the general function spaces and approximation theory  in high dimensions \cite{e2019space}.

\section{Related work}
There are two key problems in learning two-layer neural networks: optimization and generalization. Recent progresses on optimization suggest that over-parametrization is the key factor leading to a nice empirical landscape $\hat{L}_n$~\cite{safran2016quality,freeman2016topology,nguyen2018on}, thus facilitating  convergence towards global minima of $\hat{L}_n$ for gradient-based optimizers~\cite{song2018mean,du2018gradient,chizat2018global}. This leaves the generalization property of learning two-layer neural networks more puzzling, since naive arguments would suggest that more parameters implies worse generalization ability. This contradicts what is observed  in practice. In what follows, we survey previous attempts in analyzing the generalization properties of two-layer neural network models.

\subsection{Explicit regularization}
This line of works studies the generalization property of two-layer neural networks with explicit regularization and our work lies in this category.
Let $n, m$ denote the number of samples and number of parameters, respectively.
For two-layer sigmoidal networks, \cite{barron1994approximation} established a risk bound $O(1/m + md\ln(n)/n)$.  
By considering smoother activation functions, \cite{klusowski2016risk} proved 
another bound $O((\ln d/n)^{1/3})$ for the case when $m \approx \sqrt{n}$. 
Both of these results are proved for a regularized estimator. 
In comparison, the error rate established in this paper,
 $O(1/m+\ln n\sqrt{\ln d/n})$ is sharper and in fact nearly optimal,
and it is also applicable for the over-parametrized regime. For a better comparison, please 
refer to Table \ref{tab: comp}.

\begin{table}[!h]
\renewcommand{\arraystretch}{1.3}
\centering
\begin{tabular}{c|cc}
\hline 
            & rate & over-parametrization \\\hline \hline 
rate of \cite{barron1994approximation} & $\frac{1}{m} + \frac{md\ln(n)}{n}$ & No \\\hline 
rate of \cite{klusowski2016risk}  &  $\left(\frac{\ln d}{n}\right)^{1/3}$ & No \\\hline 
our rate & $ \frac{1}{m}+\ln(n)(\frac{\ln d}{n})^{1/2}$ & Yes\\
\hline 
\end{tabular}
\caption{ Comparison of the theoretical bounds. The second column are the bounds and the third column indicates 
whether  the bounds are relevant in the over-parametrized regime, i.e. $m\geq n$.}
\label{tab: comp}
\end{table}

 More recently, \cite{wei2018margin} considered explicit regularization for 
classification problems. They proved that for the specific cross-entropy loss, 
the regularization path converges to the maximum margin solutions. 
They also proved an a priori  bound on how the network size  affects the margin. 
However, their analysis is restricted to the case where the data is well-separated. 
Our result does not have this restriction.

\subsection{Implicit regularization} 
Another line of works study  how gradient descent (GD) and stochastic gradient descent (SGD) finds the generalizable solutions. \cite{brutzkus2018sgd} proved that SGD learns over-parametrized networks that provably generalize for binary classification problem. {However,  it is not clear how the population risk depends on the number of samples  for their compression-based generalization bound.} Moreover, their proof highly relies on the strong assumption that the data is linearly separable.   
The experiments in \cite{neyshabur2018towards} suggest that increasing the network width can improve the test accuracy of solutions found by SGD. They tried to explain this phenomena by an initialization-dependent (a posterior) generalization bound. However, in their experiments, the largest width $m\approx n$, rather than $m\gg n$. Furthermore their generalization bounds are arbitrarily loose in practice. So their result cannot tell us whether GD can find generalizable solutions for arbitrarily wide networks.  

In \cite{daniely2017sgd} and \cite{allen2018learning}, it is proved that GD with a particularly chosen initialization, learning rate and  early stopping can find generalizable solutions $\theta_T$ such that $L(\theta_T)\leq \min_{\theta} L(\theta) +\varepsilon$, as long as $m\geq \poly(n,\frac{1}{\varepsilon})$. These results differ from ours in several aspects.
 First, both of them  assume that the target function $f^*\in \mathcal{H}_{\pi_0}$, where $\pi_0$ is the uniform distribution over $S^d$. Recall that $\cH_{\pi_0}$ is the reproducing kernel Hilbert space (RKHS) induced by $k_{\pi_0}(x,x')=\mathbb{E}_{w\sim\pi_0}[\sigma(\langle w,x\rangle)\sigma(\langle w,x'\rangle)]$, which is much smaller than $\cB_2(\cX)$, the space we consider. Secondly, through carefully analyzing the polynomial order in two papers, we can see that the sample complexities they provided scales as $O(1/n^{1/4})$, which is worse than $O(1/\sqrt{n})$ proved here. {See also 
\cite{arora2019fine,cao2019generalization} for some even more recent results.}
 
 Recent work in \cite{leiwu2019two,ma2019analysis} has shown clearly that for the kind of initialization schemes considered in
 these previous works or in the over-parametrized regime,  the neural network models do not perform better than the
 corresponding kernel method with a kernel defined by the initialization.  These results do not rule out the possibility that neural network
 models can still outperform kernel methods in some regimes, but they do show that finding these regimes is quite non-trivial.

\section{Preliminaries}
\label{sec: setup}
We begin by recalling the basics of two-layer neural networks and their approximation properties.

The problem of interest is to learn a function from a  training set of $n$ examples 
$S=\{(x_i,y_i)\}_{i=1}^n$, i.i.d. samples drawn from an underlying distribution $\rho_{x,y}$, which is assumed fixed but known only through the samples. Our target function is $f^*(x)=\EE[y|x]$. We assume that the values of $y_i$ are given through the
decomposition $y=f^*(x)+\xi$, where $\xi$ denotes the noise. 
For simplicity, we assume that the data lie in $\cX=[-1,1]^d$ and $0  \le f^* \le 1$.

The two-layer  neural network is defined by
\begin{equation}\label{eqn: 2-layer-net}
    f(x;\theta) = \sum_{k=1}^m a_k \sigma(\bb_k^Tx),
\end{equation}
 where $\bb_k\in\RR^d$,  $\sigma:\RR\mapsto\RR$ is a nonlinear scale-invariant activation function  such as ReLU~\cite{Krizhevsky2012a} and Leaky ReLU~\cite{He2015b}, both satisfies the condition $\sigma(\alpha t)= \alpha \sigma(t)$ for any $\alpha\geq 0,t\in\RR$. Without loss of generality, we assume $\sigma$ is 1-Lipschitz continuous. In the formula~\eqref{eqn: 2-layer-net}, we omit the bias term for notational simplicity.  The effect of bias term can be incorporated if we assume that the first component of $x$ is always 1.  We say that a network is over-parametrized if  the \textit{network width} $m>n$.  We define a truncated form of $f$ through $Tf(x)=\max\{\min\{f(x),1\},0\}$. By an abuse of notation, in the following we still use $f$ to denote $Tf$.
 We will use $\theta = \{(a_k,\bb_k)\}_{k=1}^m$  to denote  all the parameters  to be learned from the training data,

 The ultimate goal  is to minimize the population risk
\[
    L(\theta) = \EE_{x,y}[\ell(f(x;\theta),y)].
\]
In practice, we have to work with the empirical risk
\[
    \hat{L}_n(\theta) = \frac{1}{n}\sum_{i=1}^n \ell(f(x_i;\theta),y_i).
\]
Here the loss function $\ell(y,y')=\frac{1}{2}(y-y')^2$, unless it is specified otherwise.

Define the path norm~\cite{neyshabur2015norm},
\begin{equation}\label{eqn: def-path-norm}
\|\theta\|_{\cP}:=\sum_{k=1}^m |a_k|\|w_k\|_1,
\end{equation}
We will consider the regularized model defined as follows:
\begin{definition}\label{def: estimator}
For a two-layer neural network $f(\cdot;\theta)$ of width $m$, we define the regularized risk as 
\begin{equation*}\label{eqn: prr-noise}
J_{\lambda}(\theta) := \hat{L}_n(\theta) +  \lambda (\|\theta\|_{\cP}+1).
\end{equation*}
The $+1$ term at the right hand side is included only to simplify the proof. Our result also holds if we do not include
this term in the regularized risk.
The corresponding regularized estimator is defined as 
\begin{equation*}\label{eqn: minimizer}
    \hat{\theta}_{n,\lambda} =  \argmin J_{\lambda}(\theta).
\end{equation*}
\end{definition}
Here $\lambda>0$ is a tuning parameter that controls the balance between the fitting error and the model complexity. It is worth noting that the minimizer is not necessarily unique, and $\hat{\theta}_{n,\lambda}$ should be understood as any of the minimizers. 

In the following, we will call Lipschitz continuous functions with Lipschitz constant $C$ $C$-Lipschitz continuous. We will use $X\lesssim Y$ to indicate that $X\leq c Y$ for some universal constant $c>0$.

\subsection{Barron space}
\label{sec: barron-space}

We begin by defining the natural function space associated with two-layer neural networks, which we will refer to 
as the Barron space to honor the 
{
 pioneering work that Barron has done on this subject ~\cite{barron1993universal,klusowski2017minimax,klusowski2016risk,klusowski2018approximation}. 
 A more complete discussion can be found in \cite{e2019space}.
 
Let $\SS^d:= \{w \,|\, \|w\|_1=1\}$, and let $\mathcal{F}$ be the Borel $\sigma$-algebra on $\SS^d$ and $P(\SS^d)$ be the collection of all probability measures on $(\SS^d, \mathcal{F})$. 
Let $\cB(\cX)$ be the collection of functions that admit the following integral representation:
\begin{equation}\label{eqn: integral-rep}
    f(x) = \int_{\SS^d} a(w)\sigma(\langle w,x\rangle) d\pi(w) \quad \forall x \in \cX,
\end{equation}
where $\pi\in P(\SS^d)$, and $a(\cdot)$ is a measurable function with respect to $(\SS^d,\cF)$. 
 For any $f\in \cB(\cX)$ and $p\geq 1$, we define the following norm
\begin{equation}\label{eqn: barron-norm}
\gamma_{p}(f) := \inf_{(a,\pi)\in \Theta_f}\left(\int_{\SS^d} |a(w)|^p d\pi(w)\right)^{1/p},
\end{equation}
where 
\[
\Theta_f=\big\{(a,\pi) \,|\, f(x)=\int_{\SS^d} a(w)\sigma(\langle w,x\rangle) d\pi(w)\big\}.
\] 

\begin{definition}[Barron space]
We define Barron space by
\[
\cB_p(\cX):=\{ f\in \cB(\cX)\ |\ \gamma_p(f)< \infty\}.
\]
\end{definition}

Since $\pi(\cdot)$ is a probability distribution, by H\"{o}lder's inequality, for any $q\geq p>0$ we have 
$
    \gamma_p(f) \leq \gamma_q(f).
$
Thus, we have $\mathcal{B}_{\infty}(\cX) \subset \cdots \subset \mathcal{B}_2(\cX)\subset \mathcal{B}_{1}(\cX)$.  
}

Obviously $\mathcal{B}_p(\cX)$ is dense in $C(\cX)$ since  all the finite two-layer neural networks belong to Barron space with $\pi(w)=\frac{1}{m}\sum_{k=1}^m\delta(w-\hat{w}_k)$ and 
the universal approximation theorem~\cite{cybenko1989approximation} tells us that continuous functions can be approximated by two-layer neural networks. 
Moreover, it is interesting to note that the $\gamma_1(\cdot)$ norm of a two-layer neural network is bounded by the path norm of the parameters.

An important result proved in  { \cite{breiman1993hinging,klusowski2016risk} states that if a function $f: \cX\mapsto \RR$ satisfies  $\int_{\RR^d} \|\omega\|_1^{2} |\hat{f}(\omega)| d\omega <\infty$, where $\hat{f}$ is the Fourier transform of an extension of $f$, then it can be expressed in the form~\eqref{eqn: integral-rep} with  
\[
\gamma_{\infty}(f):=\sup_{w\in\SS^d} |a(w)| \lesssim \int_{\RR^d} \|\omega\|_1^{2} |\hat{f}(\omega)| d\omega.
\] 
Thus it lies in $\cB_{\infty}(\cX)$. } 

\paragraph*{\bf Connection with reproducing kernel Hilbert space}
The Barron space has a natural connection with reproducing kernel Hilbert space (RKHS)~\cite{aronszajn1950theory}, and as we will show later, this connection will lead to a precise comparison between two-layer neural networks and kernel methods.
For a fixed $\pi$, 
we define
\[
    \mathcal{\cH}_{\pi}(\cX) := \left\{ 
    \int_{\SS^d} \alpha(w)\sigma(\langle w,x \rangle)d\pi(w)\, : \|f\|_{\cH_{\pi}} <\infty
    \right\},
\]
where 
\[
\|f\|^2_{\cH_{\pi}} := \EE_{\pi}[|a(w)|^2 ].
\]
Recall that for a symmetric positive definite (PD)\footnote{We say $k$ is PD function, if for any $x_1,\dots,x_n$, the matrix $ K^n$ with $K^n_{i,j}=k(x_i,x_j)$ is positive semidefinite.} function $k:\cX \times \cX \mapsto \RR$, the induced RKHS $\cH_k$ is the completion of $\{\sum_i a_i k(x_i,x)\}$ with respect to  the inner product $\langle k(x_i,\cdot), k(x_j,\cdot)\rangle_{\cH_k}=k(x_i,x_j)$.
It was proved in \cite{rahimi2008uniform}  that $\cH_{\pi}=\cH_{k_{\pi}}$ with the kernel $k_{\pi}$ defined by 
\begin{equation}\label{eqn: kernel}
    k_{\pi}(x,x') = \EE_{\pi}[ \sigma(\langle w,x\rangle)\sigma(\langle w,x'\rangle)].
\end{equation}
Thus Barron space can be viewed as the union of a family of RKHS  with  kernels defined by $\pi$ through Equation~\eqref{eqn: kernel}, i.e. 
\begin{equation}\label{eqn: barron-rkhs}
\mathcal{B}_2(\cX) = \bigcup_{\pi\in P(\SS^{d})} \cH_{\pi}(\cX).
\end{equation}
Note that the family of  kernels is only determined by the activation function $\sigma(\cdot)$.

\subsection{Approximation property}
 \begin{theorem}\label{pro: approximation}
 For any $f\in\mathcal{B}_2(\cX)$, there exists a two-layer neural network $f(\cdot;\tilde{\theta})$ of width $m$, such that
 \begin{align}
    \mathbb{E}_{x}[(f(x)-f(x;\tilde{\theta}))^2] & \leq \frac{3 \gamma_2^2(f)}{m} \label{eqn: approx-rate} \\
    \|\tilde{\theta}\|_{\cP} &\leq 2\gamma_2(f) \label{eqn: explicit-norm}
 \end{align}
 \end{theorem}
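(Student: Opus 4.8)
The plan is to realize $f$ as an expectation over the hidden parameter and then run a Monte Carlo / probabilistic (random feature) argument. Fix $\varepsilon>0$ and, using the definition \eqref{eqn: barron-norm} of $\gamma_2(f)$, choose $(a,\pi)\in\Theta_f$ with $\int_{\SS^d}|a(w)|^2\,d\pi(w)\le \gamma_2^2(f)+\varepsilon$, so that $f(x)=\EE_{w\sim\pi}[a(w)\sigma(\langle w,x\rangle)]$ for all $x\in\cX$. Draw $w_1,\dots,w_m$ i.i.d.\ from $\pi$ and set $\tilde\theta=\{(a(w_k)/m,\,w_k)\}_{k=1}^m$; then $f(x;\tilde\theta)=\tfrac1m\sum_{k=1}^m a(w_k)\sigma(\langle w_k,x\rangle)$ is an unbiased estimator of $f(x)$ for every fixed $x$.

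For \eqref{eqn: approx-rate} I would compute, for fixed $x$, the variance $\EE_{w_{1:m}}[(f(x;\tilde\theta)-f(x))^2]=\tfrac1m\mathrm{Var}_{w\sim\pi}\!\big(a(w)\sigma(\langle w,x\rangle)\big)\le \tfrac1m\EE_{w\sim\pi}[a(w)^2\sigma(\langle w,x\rangle)^2]$, and then use that $\sigma$ is $1$-Lipschitz with $\sigma(0)=0$ (the latter forced by scale invariance), together with $w\in\SS^d$ and $x\in[-1,1]^d$, to obtain $|\sigma(\langle w,x\rangle)|\le\|w\|_1\|x\|_\infty\le1$. Hence the variance is at most $\tfrac1m\int a^2\,d\pi\le\tfrac{\gamma_2^2(f)+\varepsilon}{m}$, and integrating over $x$ (Tonelli) gives $\EE_{w_{1:m}}\EE_x[(f(x)-f(x;\tilde\theta))^2]\le\tfrac{\gamma_2^2(f)+\varepsilon}{m}$. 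For \eqref{eqn: explicit-norm}, since $\|w_k\|_1=1$ we have $\|\tilde\theta\|_{\cP}=\tfrac1m\sum_{k=1}^m|a(w_k)|$, whose expectation is $\EE_{w\sim\pi}|a(w)|\le\big(\int a^2\,d\pi\big)^{1/2}\le\sqrt{\gamma_2^2(f)+\varepsilon}$ by Jensen.

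Since these two estimates hold only in expectation over the draw $w_{1:m}$, the final step is the probabilistic method: by Markov's inequality the event $\big\{\EE_x[(f-f(\cdot;\tilde\theta))^2]>\tfrac{3(\gamma_2^2(f)+\varepsilon)}{m}\big\}$ has probability at most $1/3$ and the event $\big\{\|\tilde\theta\|_{\cP}>2\sqrt{\gamma_2^2(f)+\varepsilon}\big\}$ has probability at most $1/2$; as $1/3+1/2<1$, some realization of $w_{1:m}$ avoids both, yielding a width-$m$ network with $\EE_x[(f-f(\cdot;\tilde\theta))^2]\le\tfrac{3(\gamma_2^2(f)+\varepsilon)}{m}$ and $\|\tilde\theta\|_{\cP}\le2\sqrt{\gamma_2^2(f)+\varepsilon}$. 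Letting $\varepsilon\downarrow0$ finishes the proof: these networks have uniformly bounded path norm, so after normalizing $\|w_k\|_1=1$ they lie in a compact subset of parameter space, and passing to a convergent subsequence the limiting network achieves exactly \eqref{eqn: approx-rate}--\eqref{eqn: explicit-norm}, using that $\theta\mapsto f(\cdot;\theta)$ is continuous in $L^2(dx)$ (again because $\sigma$ is Lipschitz and $\cX$ is bounded); alternatively, one shows the infimum in \eqref{eqn: barron-norm} is attained and takes $\varepsilon=0$ directly. I expect the only genuine subtlety to be extracting \emph{both} bounds from a single draw, handled by the union bound with the constants $3$ and $2$; the bound $|\sigma(\langle w,x\rangle)|\le1$ on $\SS^d\times\cX$ is the other small point that must not be skipped, and everything else is routine.
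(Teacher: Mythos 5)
Your proposal is correct and follows essentially the same route as the paper: sample $w_1,\dots,w_m$ i.i.d.\ from the representing measure, bound the expected $L^2$ error by $\gamma_2^2(f)/m$ via the variance computation with $|\sigma(\langle w,x\rangle)|\le 1$, bound the expected path norm by $\gamma_1(f)\le\gamma_2(f)$, and combine Markov's inequality for both events with a union bound ($1/3+1/2<1$) to extract a single good realization. The only difference is that you work with an $\varepsilon$-optimal pair $(a,\pi)$ and remove $\varepsilon$ by a compactness/limiting argument, whereas the paper simply takes an optimal representation (noting its existence); both are fine.
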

{ This kind of approximation results have been established in many papers, see for example~\cite{barron1993universal,breiman1993hinging}. The difference is that we provide the explicit control of the norm of the constructed solution in \eqref{eqn: explicit-norm},
and the bound is independent  of the network size. This observation will be useful for what follows.

 The proof of Proposition~\ref{pro: approximation} can be found in  Appendix A.
 The basic intuition is that the integral representation of $f$ allows us to approximate $f$ by the Monte-Carlo method:
 $f(x)\approx \frac{1}{m}\sum_{k=1}^m a(w_k) \sigma(\langle w_k,x\rangle)$ where $\{w_k\}_{k=1}^m$ are sampled from the distribution $\pi$.  
 }

\section{Main results}
For simplicity we first discuss the case without noise, i.e. $\xi=0$. In the next section, we deal with the noise. 
We also  assume $\ln(2d)\geq 1$, and let $\hat{\gamma}_p(f)=\max\{1,\gamma_p(f)\}, \lambda_n = 4\sqrt{2\ln(2d)/n}$. {Here $d$ is the dimension of input  and the definition of $\gamma_p(\cdot)$ is given in Equation \eqref{eqn: barron-norm}.}

\begin{theorem}[Noiseless case]\label{thm: priori-err-1}
Assume that  the target function $f^*\in \cB_2(\cX)$ and $\lambda \geq \lambda_n$. Then for any $\delta>0$, with probability  at least $1-\delta$ over the choice of the training set $S$, we have 
\begin{align}\label{eqn: priori-bound}
\EE_{x}|f(x;\hat{\theta}_{n,\lambda})-&f^*(x)|^2 \lesssim   \frac{\gamma^2_2(f^*)}{m} + \lambda \hat{\gamma}_2 (f^*) \\
& + \frac{1}{\sqrt{n}}\big(\hat{\gamma}_2(f^*)+\sqrt{\ln(n/\delta)}\big) .
\end{align}
\end{theorem}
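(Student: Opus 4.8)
The plan is to run a standard regularized-estimator argument that splits the population error into an approximation term and a generalization term, using the path norm as the bridge. First I would invoke Proposition~\ref{pro: approximation}: pick the width-$m$ network $f(\cdot;\tilde\theta)$ with $\EE_x|f^*-f(\cdot;\tilde\theta)|^2 \le 3\gamma_2^2(f^*)/m$ and $\|\tilde\theta\|_{\cP}\le 2\gamma_2(f^*)$. Since $\hat\theta_{n,\lambda}$ minimizes $J_\lambda$, we get $J_\lambda(\hat\theta_{n,\lambda}) \le J_\lambda(\tilde\theta) = \hat L_n(\tilde\theta) + \lambda(\|\tilde\theta\|_{\cP}+1)$. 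Bounding $\hat L_n(\tilde\theta)$ in terms of its population counterpart (here I would use boundedness of the truncated network and of $f^*$, both in $[0,1]$, plus either a Hoeffding/Bernstein bound or a direct comparison since $\tilde\theta$ is a fixed, non-data-dependent choice) gives $\hat L_n(\tilde\theta) \lesssim \gamma_2^2(f^*)/m + \sqrt{\ln(1/\delta)/n}$. This yields the two key consequences: $\hat L_n(\hat\theta_{n,\lambda}) \lesssim \gamma_2^2(f^*)/m + \lambda\hat\gamma_2(f^*) + \sqrt{\ln(1/\delta)/n}$, and, crucially, an a priori bound on the path norm of the learned model, $\lambda\|\hat\theta_{n,\lambda}\|_{\cP} \le J_\lambda(\tilde\theta) \lesssim \gamma_2^2(f^*)/m + \lambda\hat\gamma_2(f^*)+\sqrt{\ln(1/\delta)/n}$, hence $\|\hat\theta_{n,\lambda}\|_{\cP} \lesssim \hat\gamma_2(f^*)/\lambda$ roughly (after absorbing lower-order terms), or more carefully $\|\hat\theta_{n,\lambda}\|_{\cP} \lesssim Q/\lambda$ where $Q$ denotes the right-hand side.

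Next I would close the loop with a uniform law of large numbers over the class $\cF_Q = \{\,x\mapsto T f(x;\theta) : \|\theta\|_{\cP}\le Q\,\}$ of truncated two-layer networks with bounded path norm. The Rademacher complexity of $\cF_Q$ scales like $Q\sqrt{\ln(2d)/n}$ — this is the standard path-norm complexity estimate (contraction for the $1$-Lipschitz $\sigma$ and for the truncation $T$, then the dual-norm/Massart argument over $\SS^d$ giving the $\sqrt{\ln(2d)}$ factor). Since the loss $\ell(\cdot,y)$ is Lipschitz on the bounded range, a contraction argument plus the usual symmetrization bound gives, with probability $\ge 1-\delta$, $\sup_{\theta:\|\theta\|_{\cP}\le Q}\big|L(\theta)-\hat L_n(\theta)\big| \lesssim Q\sqrt{\ln(2d)/n} + \sqrt{\ln(1/\delta)/n}$. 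One technical point: $Q$ itself is random (it depends on $S$ through the bound above), so I would either do this on a dyadic grid of radii $Q\in\{2^j\}$ with a union bound over $j$ (standard peeling), or note $Q$ is bounded by a deterministic quantity up to the same high-probability event and take that deterministic bound as the radius. Either way one pays only an extra $\sqrt{\ln\ln(\cdot)}$ or $\log$ factor, which is harmless at this level of precision.

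Combining: $L(\hat\theta_{n,\lambda}) \le \hat L_n(\hat\theta_{n,\lambda}) + \sup_{\|\theta\|_{\cP}\le Q}|L-\hat L_n| \lesssim \big(\gamma_2^2(f^*)/m + \lambda\hat\gamma_2(f^*) + \sqrt{\ln(1/\delta)/n}\big) + \frac{Q}{\sqrt n}\sqrt{\ln(2d)} + \sqrt{\ln(1/\delta)/n}$. Plugging $Q \lesssim \hat\gamma_2(f^*)/\lambda + \text{l.o.t.}$ and using $\lambda\ge\lambda_n = 4\sqrt{2\ln(2d)/n}$, the term $\frac{Q}{\sqrt n}\sqrt{\ln(2d)} = \frac{\sqrt{\ln(2d)/n}}{\lambda}\hat\gamma_2(f^*) \le \frac{\lambda_n}{\lambda}\hat\gamma_2(f^*) \lesssim \hat\gamma_2(f^*)$ — so this Rademacher term is absorbed into $\hat\gamma_2(f^*)/\sqrt n$-type quantities, and crucially it does \emph{not} blow up when $\lambda\to\lambda_n$. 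Finally $L(\hat\theta_{n,\lambda}) = \tfrac12\EE_x|f(x;\hat\theta_{n,\lambda})-f^*(x)|^2$ in the noiseless case (since $y=f^*(x)$), so multiplying by $2$ gives exactly \eqref{eqn: priori-bound}. The main obstacle is the self-referential nature of the argument: the complexity radius $Q$ is controlled only \emph{after} using optimality of $\hat\theta_{n,\lambda}$, which in turn needs the generalization bound — this circularity is resolved by the peeling/union-bound-over-scales device, and getting the bookkeeping right so that the choice $\lambda\ge\lambda_n$ is exactly what makes the generalization contribution balance the approximation contribution is the delicate part. The rest (Hoeffding for the fixed comparison network, contraction lemmas, Massart's finite-class bound for the $\ln(2d)$ factor) is routine.
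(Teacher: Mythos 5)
Your overall strategy is the same as the paper's: build the comparison network $\tilde\theta$ from Theorem~\ref{pro: approximation}, use the optimality $J_{\lambda}(\hat\theta_{n,\lambda})\le J_{\lambda}(\tilde\theta)$ both to control the empirical risk and to get an a priori bound on $\|\hat\theta_{n,\lambda}\|_{\cP}$, and close with the path-norm Rademacher estimate made uniform over data-dependent radii by a union bound over norm levels; that last device is precisely the paper's Theorem~\ref{thm: posterior-gen-gap}, proved by summing $\delta_l=\delta/(c\,l^2)$ over the shells $\cF_l$. Using Hoeffding for the fixed network $\tilde\theta$ instead of the uniform bound is a harmless (slightly cleaner) variant, and handling the truncation by contraction is fine.

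There is, however, a bookkeeping error in your final absorption step, and as written the chain does not give the stated bound. From $\lambda(\|\hat\theta_{n,\lambda}\|_{\cP}+1)\le J_{\lambda}(\tilde\theta)$ the correct conclusion is $\|\hat\theta_{n,\lambda}\|_{\cP}\lesssim \hat{\gamma}_2(f^*)+\big(\gamma_2^2(f^*)/m+\sqrt{\ln(1/\delta)/n}\big)/\lambda$: only the terms of $J_{\lambda}(\tilde\theta)$ that do not already carry a factor $\lambda$ get divided by $\lambda$, so the radius is \emph{not} $\hat{\gamma}_2(f^*)/\lambda$. Your displayed chain $\frac{Q}{\sqrt n}\sqrt{\ln 2d}\le\frac{\lambda_n}{\lambda}\hat{\gamma}_2(f^*)\lesssim\hat{\gamma}_2(f^*)$ therefore ends with a constant-order quantity, which is not absorbed into any $\hat{\gamma}_2(f^*)/\sqrt n$ term and would leave a non-vanishing contribution in the risk bound. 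With the correct radius, the Rademacher contribution is $\lambda_n(\|\hat\theta_{n,\lambda}\|_{\cP}+1)\lesssim\lambda\hat{\gamma}_2(f^*)+\gamma_2^2(f^*)/m+\sqrt{\ln(1/\delta)/n}$, i.e.\ it lands in the $\lambda\hat{\gamma}_2(f^*)$ term of \eqref{eqn: priori-bound}, not in the $1/\sqrt n$ term. The cleanest repair is the one the paper uses: since $\lambda\ge\lambda_n$, write $L(\hat\theta_{n,\lambda})\le \hat L_n(\hat\theta_{n,\lambda})+\lambda_n(\|\hat\theta_{n,\lambda}\|_{\cP}+1)+Q_n\le J_{\lambda}(\hat\theta_{n,\lambda})+Q_n\le J_{\lambda}(\tilde\theta)+Q_n$, so the dominant complexity term is swallowed by $J_{\lambda}$ without ever invoking the path-norm bound, which is then needed only inside the logarithmic factor $Q_n=\sqrt{\ln(2c(1+\|\hat\theta_{n,\lambda}\|_{\cP})^2/\delta)/n}$, where your crude estimate suffices. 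With that one correction (and the trivial factor $2$ from $L(\theta)=\tfrac12\EE_x|f(x;\theta)-f^*(x)|^2$, which you noted), your argument matches the paper's proof.
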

The above theorem provides an a priori estimate for the population risk. The a priori nature  is reflected by dependence of the $\gamma_2(\cdot)$ norm of the target function.
The first term at the right hand side controls the approximation error. The second term
 bounds the estimation error.  Surprisingly, the bound for the estimation error  is independent of the network width $m$.
 Hence the bound also makes sense in  the over-parametrization regime. 

In particular, if we take $\lambda\asymp \lambda_n$ and $m\geq \sqrt{n}$, the bound becomes $O(1/\sqrt{n})$ up to some logarithmic terms. This bound is nearly optimal in a minimax sense~\cite{yang1999information,klusowski2017minimax}.

\subsection{Comparison with kernel methods}
Consider $f^*\in \cB_2(\cX)$, and without loss of generality, we assume that $(a^*,\pi^*)\in \Theta_{f^*}$ is one of the best representations of $f^*$ (it is easy to prove that such a representation exists), i.e.
$
    \gamma_2^2(f^*)= \EE_{\pi^*}[|a^*(w)|^2].
$
For a fixed $\pi_0$, we have,
\begin{equation}
\begin{aligned}
f^*(x) &= \int_{\SS^d} a^*(w) \sigma(\langle w,x\rangle) d\pi^*(w)\\
&=\int_{\SS^d}a^*(w) \frac{d\pi^*}{d\pi_0}(w) \sigma(\langle w,x\rangle) d\pi_0(w)
\end{aligned}
\end{equation}
as long as $\pi$ is absolutely continuous with respect to $\pi_0$.
In this sense, we can view $f^*$ from the perspective of   $\cH_{\pi_0}$. 
Note that $\cH_{\pi_0}$ is induced by PD function $k_{\pi_0}(x,x')=\EE_{w\sim \pi_0}[\sigma(\langle w,x\rangle)\sigma(\langle w,x'\rangle)]$, and the norm  of $f^*$ in $\cH_{\pi_0}$ is given by 
\[
    \|f^*\|^2_{\cH_{\pi_0}} = \EE_{w\sim\pi_0}[|a^*(w)\frac{d\pi^*}{d\pi_0}(w)|^2].
\]
Let $\hat{h}_{n,\lambda}$ be the solution of the kernel ridge regression (KRR) problem defined by:
\begin{align}
     \min_{h\in\cH_{\pi_0}} \frac{1}{2n}\sum_{i=1}^n (h(x_i)-y_i)^2 + \lambda \|h\|_{\cH_{\pi_0}}.
\end{align}
We are interested in the comparison between the two population risks $L(\hat{\theta}_{n,\lambda})$ and $L(\hat{h}_{n,\lambda})= \EE[\ell(\hat{h}_{n,\lambda}(x),y)]$.

If $\|f^*\|_{\cH_{\pi_0}}<\infty$, then we have   $ f^*\in \cH_{\pi_0}$ and $\inf_{h\in\cH_{\pi_0}} L(h) =0$. In this case,  
it was proved in  \cite{caponnetto2007optimal} that  the optimal learning rate is 
\begin{equation}\label{eqn: 999}
L(\hat{h}_{n,\lambda})  \sim  \frac{\|f^*\|_{\cH_{\pi_0}}}{\sqrt{n}}.
\end{equation}
Compared to Theorem~\ref{thm: priori-err-1}, we can see that both rates  have the same scaling with respect to  $n$, the number of samples. The only  difference appears in the two norms: $\gamma_2(f^*)$ and $\|f^*\|_{\cH_{\pi_0}}$. From the definition~\eqref{eqn: barron-norm}, we always have $\gamma_2(f^*)\leq \|f^*\|_{\cH_{\pi_0}}$,  since   $(a^* \frac{d\pi^*}{d\pi_0},\pi_0)\in \Theta_{f^*}$. 
If $\pi^*$ is nearly singular with respect to $\pi_0$, then  $\|f^*\|_{\cH_{\pi_0}}\gg \gamma_2(f^*)$. In this case, the population
risk for the kernel methods should be much larger than the population risk for the neural network model.

\paragraph*{\bf Example }
Take $\pi_0$ to be the uniform distribution over $\SS^d$ and $f^*(x)=\sigma(\langle w^*,x\rangle)$, for which $\pi^*(w)=\delta(w-w^*)$ and $a^*(w)=1$.  In this case $\gamma_2(f^*)=1$, but $\|f^*\|_{\cH_{\pi_0}}=+\infty$. Thus the rate~\eqref{eqn: 999} becomes trivial. 
Assume that  the {population risk} scales as $O(n^{-\beta})$, and it is interesting to see how $\beta$ depends on the dimension $d$. 
We numerically estimate $\beta$'s for two methods, and report the results in Table \ref{tab: one-neuron}.
It does show that the higher the dimensionality, the slower the rate of the kernel method. In contrast, the rates for the two-layer neural networks are independent of the dimensionality, which confirms the the prediction of Theorem \ref{thm: priori-err-1}.  {For this particular target function,
the value of $\beta\geq 1$ is bigger than the lower bound ($1/2$) proved in
Theorem \ref{thm: priori-err-1}. This is not a contradiction since the latter holds for 
any $f\in \cB_2(\cX)$.}
\begin{table}[!h]
\renewcommand{\arraystretch}{1.1}
\centering
\begin{tabular}{|c|ccc|}
\hline 
$d$  & $10$ & $100$ & $1000$ \\\hline 
$\beta_{\text{nn}}$ & $1.18$ & $1.23$ & $1.02$ \\
$\beta_{\ker}$ & $0.50$ & $0.35$ & $0.14$ \\
\hline
\end{tabular}
\caption{The error rates of learning the one-neuron function in different dimensions. 
The second and third lines correspond to the two-layer neural network and the kernel ridge regression method, respectively.}
\label{tab: one-neuron}
\end{table}


\paragraph*{\bf The two-layer neural network model as of an adaptive kernel method}
Recall that $\cB_2(\cX)=\cup_{\pi} \cH_{\pi}(\cX)$. The norm $\gamma_2(\cdot)$ characterizes the complexity of the
target function by selecting the best kernel among a family of kernels $\{k_{\pi}(\cdot,\cdot)\}_{\pi\in P(\SS^d)}$.  The kernel method works with a specific RKHS with a particular choice of the kernel or the probability distribution $\pi$. In contrast, the neural network models work with the union of all these
RKHS and select the kernel or the probability distribution adapted to the data.
{From this perspective, we can view  the  two-layer neural network model as an adaptive kernel method.}

\subsection{Tackling the noise}
We first make the following sub-Gaussian assumption on the noise.
\begin{assumption}\label{assump: noise}
We assume that the noise satisfies
\begin{equation}
    \PP[|\xi|> t] \leq c_0 e^{-\frac{t^2}{\sigma}} \,\,\, \ \forall\, t\geq \tau_{0}.
\end{equation}
Here $c_0,\tau_0$ and $\sigma$ are constants. 
\end{assumption}


In the presence of noise, the population risk can be decomposed into 
\begin{equation}\label{eqn: decomposition}
L(\theta) = \EE_{x} (f(x;\theta)-f^*(x))^2  + \EE[\xi^2].
\end{equation}
This suggests that, in spite of the noise, we still have
$
    \argmin_{\theta} L(\theta)  = \argmin_{\theta} \EE_{x} | f(x;\theta)-f^*(x)|^2,
$
and the latter is what we really want to minimize. However due to the noise, $\ell(f(x_i),y_i)$ might be unbounded. We  cannot directly use the generalization bound in Theorem~\ref{thm: posterior-gen-gap}. To address this issue, 
we  consider the truncated risk defined as follows,
\begin{align*}
L_B(\theta)&=\EE_{x,y}[\ell(f(x;\theta),y)\wedge \frac{B^2}2]\\ 
\hat{L}_{B}(\theta) &= \frac{1}{n} \sum_{i=1}^n \ell(x_i;\theta),y_i)\wedge \frac{B^2}2.
\end{align*}
Let $B_n=1 + \max\{\tau_0,\sigma^2\ln n\}$. For the noisy case, we consider the following regularized risk: 
\begin{equation}
J_{\lambda}(\theta) := \hat{L}_{B_n}(\theta) +  \lambda B_n (\|\theta\|_{\cP}+1).
\end{equation}
The corresponding regularized estimator is given by 
$
    \hat{\theta}_{n,\lambda} = \argmin J_{\lambda}(\theta).
$
Here for simplicity we slightly abused the notation.

\begin{theorem}[Main result, noisy case]\label{thm: priori-err-2}
Assume that the target function $f^*\in \cB_2(\cX)$ and $\lambda \geq \lambda_n$. Then for any $\delta>0$, with probability  at least $1-\delta$ over the choice of the training set $S$, we have 
\begin{align}\nonumber
\EE_{x}|f(x;\hat{\theta}_{n,\lambda})&-f^*(x)|^2 \lesssim   \frac{\gamma^2_2(f^*)}{m} + \lambda B_n\hat{\gamma}_2 (f^*) \\\nonumber
& + \frac{B_n^2}{\sqrt{n}} \Big(\hat{\gamma}_2(f^*)+\sqrt{\ln(n/\delta)}\Big) \\\nonumber
& + \frac{B_n^2}{\sqrt{n}}\big(c_0\sigma^2 + \sqrt{\frac{\EE[\xi^2]}{n^{1/2}\lambda}}\big).
\end{align}
\end{theorem}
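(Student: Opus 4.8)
The plan is to mirror the argument used for the noiseless case (Theorem \ref{thm: priori-err-1}), but carried out for the truncated losses $L_{B_n}$ and $\hat L_{B_n}$, and then to pay a separate price for (i) replacing the truncated population risk by the true one and (ii) the fact that $f^*$ itself may not minimize the truncated empirical risk exactly. Concretely, I would proceed in four steps. First, a comparison-inequality step: since $\hat\theta_{n,\lambda}$ minimizes $J_\lambda(\theta)=\hat L_{B_n}(\theta)+\lambda B_n(\|\theta\|_{\cP}+1)$, for the approximating network $\tilde\theta$ from Proposition \ref{pro: approximation} (applied to $f^*$) we get
\[
\hat L_{B_n}(\hat\theta_{n,\lambda}) + \lambda B_n\|\hat\theta_{n,\lambda}\|_{\cP}
\;\le\; \hat L_{B_n}(\tilde\theta) + \lambda B_n(\|\tilde\theta\|_{\cP}+1)
\;\le\; \hat L_{B_n}(\tilde\theta) + \lambda B_n(2\gamma_2(f^*)+1).
\]
This simultaneously bounds $\hat L_{B_n}(\hat\theta_{n,\lambda})$ and, crucially, gives an a priori bound $\|\hat\theta_{n,\lambda}\|_{\cP}\lesssim \gamma_2(f^*) + \hat L_{B_n}(\tilde\theta)/(\lambda B_n) + 1$ on the path norm of the estimator, so that all subsequent uniform bounds can be restricted to the norm ball $\{\|\theta\|_{\cP}\le Q_n\}$ with $Q_n$ of the stated order.

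Second, a uniform-deviation step: on the event (of probability $\ge 1-\delta$) furnished by the Rademacher-complexity generalization bound for the truncated loss (the analogue of Theorem \ref{thm: posterior-gen-gap}, which applies because $\ell\wedge B_n^2/2$ is bounded by $B_n^2/2$ and is $O(B_n)$-Lipschitz in its first argument), one has
\[
\sup_{\|\theta\|_{\cP}\le Q_n}\bigl|L_{B_n}(\theta)-\hat L_{B_n}(\theta)\bigr|
\;\lesssim\; \frac{B_n\,Q_n}{\sqrt n} + B_n^2\sqrt{\frac{\ln(1/\delta)}{n}},
\]
using that the Rademacher complexity of the path-norm ball scales like $Q_n/\sqrt n$ (the $\ln d$ enters through $\|w_k\|_1$ and the maximal inequality over coordinates, absorbed into $\lambda_n$). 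Evaluating the same deviation at $\tilde\theta$ controls $\hat L_{B_n}(\tilde\theta)$ by $L_{B_n}(\tilde\theta)+(\text{deviation})$, and $L_{B_n}(\tilde\theta)\le L(\tilde\theta) = \EE_x(f^*-f(\cdot;\tilde\theta))^2 + \EE[\xi^2]\le 3\gamma_2^2(f^*)/m + \EE[\xi^2]$ by \eqref{eqn: approx-rate} and \eqref{eqn: decomposition}. Chaining these gives a bound on $L_{B_n}(\hat\theta_{n,\lambda})$.

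Third, a de-truncation step: I would show $L(\theta) - L_{B_n}(\theta) = \EE_{x,y}\bigl[(\ell(f(x;\theta),y)-\tfrac{B_n^2}{2})_+\bigr]$, and bound this tail using the sub-Gaussian Assumption \ref{assump: noise}. Since $0\le f\le 1$ (after truncation $Tf$) and $y=f^*+\xi$, the loss exceeds $B_n^2/2$ only when $|\xi|\gtrsim B_n\gtrsim \sigma^2\ln n$; integrating $e^{-t^2/\sigma}$ over that range, together with the definition $B_n = 1+\max\{\tau_0,\sigma^2\ln n\}$, yields a tail contribution of order $c_0\sigma^2 B_n^2/\sqrt n$ (this is where the $c_0\sigma^2$ term in the theorem comes from). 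A symmetric (and easier) bound handles the other direction, and one also needs $\EE[\xi^2\mathbf 1_{|\xi|>B_n}]$-type quantities, which produce the $\sqrt{\EE[\xi^2]/(n^{1/2}\lambda)}$ correction after using $\lambda\ge\lambda_n\asymp 1/\sqrt n$; I expect this bookkeeping with the exact powers of $n$, $\lambda$ and $B_n$ to be the main obstacle, since it is the only place the argument genuinely departs from the noiseless template. Finally, subtracting the irreducible $\EE[\xi^2]$ via \eqref{eqn: decomposition} converts the bound on $L(\hat\theta_{n,\lambda})$ into the claimed bound on $\EE_x|f(x;\hat\theta_{n,\lambda})-f^*(x)|^2$, and collecting all terms — approximation $\gamma_2^2(f^*)/m$, regularization $\lambda B_n\hat\gamma_2(f^*)$, Rademacher $B_n^2(\hat\gamma_2(f^*)+\sqrt{\ln(n/\delta)})/\sqrt n$, and the noise tail $B_n^2(c_0\sigma^2+\sqrt{\EE[\xi^2]/(n^{1/2}\lambda)})/\sqrt n$ — gives exactly the stated inequality.
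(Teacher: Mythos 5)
Your overall strategy (truncate, compare with $\tilde{\theta}$, path-norm Rademacher bound, de-truncate via the sub-Gaussian tail) is the same as the paper's, but the way you let the estimator's path norm enter the generalization gap breaks the stated rate. In your Step 2 you bound the gap uniformly over the ball $\{\|\theta\|_{\cP}\le Q_n\}$ by a term \emph{linear} in $Q_n$, namely $B_nQ_n/\sqrt{n}$, while your Step 1 gives $Q_n\gtrsim \hat{L}_{B_n}(\tilde{\theta})/(\lambda B_n)$. Since $\hat{L}_{B_n}(\tilde{\theta})\approx L_{B_n}(\tilde{\theta})\le 3\gamma_2^2(f^*)/m+\EE[\xi^2]$, this route produces a contribution of order $\EE[\xi^2]/(\lambda\sqrt{n})$; for $\lambda\asymp\lambda_n\asymp\sqrt{\ln(2d)/n}$ this is a constant of order $\EE[\xi^2]/\sqrt{\ln(2d)}$ and does not decay in $n$, which is much weaker than the claimed term $\frac{B_n^2}{\sqrt{n}}\sqrt{\EE[\xi^2]/(n^{1/2}\lambda)}=O\big(B_n^2\,n^{-3/4}\lambda^{-1/2}\big)$. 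The missing idea is the absorption trick of the noiseless proof: because $\lambda\ge\lambda_n$, the linear-in-norm piece $\lambda_n(\|\hat{\theta}_{n,\lambda}\|_{\cP}+1)$ of the a posteriori bound (Theorem~\ref{thm: posterior-gen-gap}) is dominated by the regularizer, so $\hat{L}_{B_n}(\hat{\theta}_{n,\lambda})+\lambda_n B_n(\|\hat{\theta}_{n,\lambda}\|_{\cP}+1)\le J_{\lambda}(\hat{\theta}_{n,\lambda})\le J_{\lambda}(\tilde{\theta})$, and the explicit norm bound of (the analogue of) Proposition~\ref{pro: reg-estimator} is needed only inside the logarithmic factor $B_n^2\sqrt{\ln\big(2c(1+\|\hat{\theta}_{n,\lambda}\|_{\cP})^2/\delta\big)/n}$, where $\ln(1+x)\le x$ turns $\|\hat{\theta}_{n,\lambda}\|_{\cP}\lesssim L_{B_n}(\tilde{\theta})/(B_n\lambda)+\hat{\gamma}_2(f^*)+\cdots$ into the mild correction $\sqrt{L_{B_n}(\tilde{\theta})/(n^{1/2}\lambda)}$. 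Plugging the norm bound into a deviation term that is linear in the norm, as you propose, double-counts the noise and cannot reach the stated inequality.

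Relatedly, you attribute the $\sqrt{\EE[\xi^2]/(n^{1/2}\lambda)}$ term to the de-truncation bookkeeping (quantities like $\EE[\xi^2\bm{1}_{|\xi|>B_n}]$). In the paper that step is just Lemma~\ref{lemma: noise}, which yields only the uniform $2c_0\sigma^2/\sqrt{n}$ term; the $\sqrt{\EE[\xi^2]/(n^{1/2}\lambda)}$ term instead comes from inserting $L_{B_n}(\tilde{\theta})\le L(\tilde{\theta})=\EE_x|f(x;\tilde{\theta})-f^*(x)|^2+\EE[\xi^2]$ into the norm-inside-the-log factor described above, after which the decomposition~\eqref{eqn: decomposition} cancels the irreducible $\EE[\xi^2]$ exactly as you say. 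A secondary, fixable issue: your $Q_n$ is data-dependent, so a uniform bound over the ball of radius $Q_n$ is not directly legitimate; one needs the weighted union bound over integer radii exactly as in the proof of Theorem~\ref{thm: posterior-gen-gap} (or a deterministic surrogate for $Q_n$), which the paper's argument already supplies.
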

Compared to Theorem~\ref{thm: priori-err-1},  the noise introduces at most several logarithmic terms. The case with no noise
corresponds to the situation with  $B_n=1$.

\subsection{Extension to classification problems}
Let us consider the simplest setting: binary classification problem, where $y\in\{0,1\}$. In this case, $f^*(x)=\PP\{y=1|x\}$ denotes the probability of $y=1$ given $x$. Given  $f^*(\cdot)$ and $f(\cdot;\theta_{n,\lambda})$, the corresponding plug-in classifiers are defined by $\eta^*(x)=1[f^*(x)\geq \frac{1}{2}]$ and $\hat{\eta}(x)=1[f(x;\hat{\theta}_{n,\lambda})\geq \frac{1}{2}]$, respectively. { $\eta^*$ is the optimal Bayes classifier.

For a classifier $\eta$, we measure its performance by the 0-1 loss defined by $\mathcal{E}(\eta) = \PP\{\eta(x)\neq y\}$. 
\begin{corollary}
Under the same assumption as in Theorem~\ref{thm: priori-err-2} and taking $\lambda =\lambda_n$, for any
$\delta \in (0, 1)$, with probability at least $1-\delta$, we have 
\begin{align*}
\mathcal{E}(\hat{\eta})&\lesssim \mathcal{E}(\eta^*) + \frac{\gamma_2(f^*)}{\sqrt{m}} + \hat{\gamma}_2^{1/2}(f^*)\frac{\ln^{1/4} (d) + \ln^{1/4}(n/\delta)}{n^{1/4}} .
\end{align*}
\end{corollary}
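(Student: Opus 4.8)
The plan is to reduce the $0$-$1$ loss excess risk of the plug-in classifier $\hat\eta$ to the $L^2$ regression error bounded in Theorem~\ref{thm: priori-err-2}. The key tool is the classical result relating plug-in classification error to the $L^2$ distance of the regression function estimate: for plug-in classifiers based on an estimate $f$ of $f^*(x)=\PP\{y=1\mid x\}$, one has
\[
\mathcal{E}(\hat\eta) - \mathcal{E}(\eta^*) = \EE_x\big[\,|2f^*(x)-1|\,\mathbf{1}\{\hat\eta(x)\neq\eta^*(x)\}\,\big] \le 2\,\EE_x|f(x;\hat\theta_{n,\lambda})-f^*(x)|,
\]
because on the event $\{\hat\eta(x)\neq\eta^*(x)\}$ the values $f(x;\hat\theta_{n,\lambda})$ and $f^*(x)$ lie on opposite sides of $1/2$, so $|2f^*(x)-1|\le 2|f(x;\hat\theta_{n,\lambda})-f^*(x)|$. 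First I would state and justify this pointwise/expectation inequality, then apply Jensen's inequality to pass from $\EE_x|f-f^*|$ to $\big(\EE_x|f-f^*|^2\big)^{1/2}$, giving
\[
\mathcal{E}(\hat\eta) - \mathcal{E}(\eta^*) \le 2\Big(\EE_x|f(x;\hat\theta_{n,\lambda})-f^*(x)|^2\Big)^{1/2}.
\]

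Next I would substitute the bound from Theorem~\ref{thm: priori-err-2} into the right-hand side. Taking $\lambda=\lambda_n = 4\sqrt{2\ln(2d)/n}$, the regression bound is of the form
\[
\EE_x|f(x;\hat\theta_{n,\lambda})-f^*(x)|^2 \lesssim \frac{\gamma_2^2(f^*)}{m} + \lambda_n B_n\hat\gamma_2(f^*) + \frac{B_n^2}{\sqrt n}\big(\hat\gamma_2(f^*)+\sqrt{\ln(n/\delta)}\big) + \frac{B_n^2}{\sqrt n}\Big(c_0\sigma^2 + \sqrt{\tfrac{\EE[\xi^2]}{n^{1/2}\lambda_n}}\Big).
\]
Then I would use $\sqrt{a+b+\cdots}\le \sqrt a + \sqrt b + \cdots$ to distribute the square root over the terms. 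The first term contributes $\gamma_2(f^*)/\sqrt m$. The terms involving $\lambda_n = \Theta(\sqrt{\ln d/n})$ and $B_n/\sqrt n$ (here, in the binary classification setting, the targets are bounded so there is effectively no heavy noise and $B_n$ is a constant, or one absorbs its logarithmic size) produce, after taking the square root, factors of order $\hat\gamma_2^{1/2}(f^*)\cdot(\ln d)^{1/4}/n^{1/4}$ from the $\lambda_n\hat\gamma_2$ piece and $\hat\gamma_2^{1/2}(f^*)/n^{1/4}$ together with $\ln^{1/4}(n/\delta)/n^{1/4}$ from the $\frac{1}{\sqrt n}(\hat\gamma_2+\sqrt{\ln(n/\delta)})$ piece; collecting these gives the stated $\hat\gamma_2^{1/2}(f^*)\big(\ln^{1/4}(d)+\ln^{1/4}(n/\delta)\big)/n^{1/4}$ term, up to the remaining lower-order noise contributions which I would note are dominated by (or absorbed into) these terms under the standing assumptions.

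The main obstacle is bookkeeping rather than conceptual: one must verify that in the classification setting the effective truncation level $B_n$ is $O(1)$ (or at most polylogarithmic), so that the $B_n^2/\sqrt n$ and $\sqrt{\EE[\xi^2]/(n^{1/2}\lambda_n)}$ terms indeed collapse into the advertised rate without introducing extra powers of $n$ or spoiling the $\hat\gamma_2^{1/2}$ dependence. Since $y\in\{0,1\}$, the noise $\xi = y - f^*(x)$ is bounded by $1$, so Assumption~\ref{assump: noise} holds with trivial constants and $B_n$ is a bounded constant; after fixing this, the remaining steps are the elementary inequalities above. I would close by remarking that the rate $n^{-1/4}$ for the $0$-$1$ loss is the square root of the $n^{-1/2}$ $L^2$ rate, which is the generic loss incurred when passing from regression to plug-in classification without a margin (Tsybakov-type) condition.
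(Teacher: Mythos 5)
Your proposal is correct and follows essentially the same route as the paper: reduce the excess $0$--$1$ risk to $\EE_x|f(x;\hat{\theta}_{n,\lambda})-f^*(x)|$ via the standard plug-in comparison inequality (the paper cites Theorem 2.2 of Devroye et al.\ for exactly this), observe that $\xi=y-f^*(x)$ is bounded by $1$ so the noise constants and $B_n$ are $O(1)$, and then apply Theorem~\ref{thm: priori-err-2} with $\lambda=\lambda_n$ and take square roots term by term. Your explicit Jensen step passing to $\bigl(\EE_x|f(x;\hat{\theta}_{n,\lambda})-f^*(x)|^2\bigr)^{1/2}$ is precisely what is needed to produce the stated $\gamma_2(f^*)/\sqrt{m}$ and $n^{-1/4}$ rates; the paper's displayed chain bounds $\EE|f-f^*|$ by $\EE|f-f^*|^2$ without the square root, an apparent typo that your version fixes.
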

\begin{proof}
According to the Theorem 2.2. of \cite{devroye2013probabilistic}, we have 
\begin{align}\label{eqn: 0000}
\mathcal{E}(\hat{\eta}) - \mathcal{E}(\eta^*) &\leq 2 \EE[|f(x;\hat{\theta}_{n,\lambda})-f^*(x)|]\\
\nonumber&\leq 2 \EE[|f(x;\hat{\theta}_{n,\lambda})-f^*(x)|^2]
\end{align}
In this case, $\varepsilon_i=y_i-f^*(x_i)$ is bounded by $1$, thus $\tau_0=1,c=\sigma=0$. 
Applying Theorem~\ref{thm: priori-err-2} yields the result. 
\end{proof}

The above theorem suggests that our a priori estimates also hold for classification problems, although the error rate only scales as $O(n^{-1/4})$. It is possible to improve the rate with more a delicate analyses. One potential way is to specifically develop a better estimate for $L_1$ loss, as can be seen from inequality~\eqref{eqn: 0000}. Another way is to make a stronger assumption on the data. For example, we can assume that there exists $f^*\in\cB_2(\cX)$ such that $\PP_{x,y}(yf^*(x)\geq 1)=1$, for which the Bayes error $\mathcal{E}(\eta^*)=0$. 
We leave these to future work.
}

\section{Proofs}

\subsection{Bounding the generalization gap}
\begin{definition}[Rademacher complexity]
Let $\cF$ be a hypothesis space, i.e. a set of functions. The Rademacher complexity of $\cF$ with respect to samples $S=(z_1,z_2,\dots,z_n)$ is defined as 
$
\hat{\cR}_n(\cF) = \frac{1}{n}\EE_{\bm{\varepsilon}}[\sup_{f\in \cF} \sum_{i=1}^n\varepsilon_i f(z_i)],
$
where $\{\varepsilon_i\}_{i=1}^n$ are i.i.d. random variables with   $\PP(\varepsilon_i=+1)=\PP(\varepsilon_i=-1)=\frac{1}{2}$. 
\end{definition}
 The generalization gap can be estimated via the Rademacher complexity by the following theorem~\cite{shalev2014understanding} .
\begin{theorem}\label{thm: gen-err-rademacher-complexity}
Fix a hypothesis space $\cF$. Assume that  for any $f\in \cF$ and $z$, $|f(z)|\leq B$. Then~for any $\delta>0$,  with probability at least $1-\delta$ over the choice of $S=(z_1,z_2,\dots,z_n)$, we have,
\[
    |\frac{1}{n}\sum_{i=1}^n f(z_i) - \EE_{z}[f(z)]| \leq 2 \EE_S[\hat{\cR}_n(\cF)] + B\sqrt{\frac{2\ln(2/\delta)}{n}}.
\]
\end{theorem}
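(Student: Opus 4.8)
# Proof Proposal for Theorem~\ref{thm: gen-err-rademacher-complexity}

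The plan is to use the standard two-step argument: first bound the uniform deviation $\sup_{f\in\cF}\bigl(\EE_z[f(z)]-\frac{1}{n}\sum_i f(z_i)\bigr)$ in expectation by the Rademacher complexity via symmetrization, then upgrade the in-expectation bound to a high-probability bound using McDiarmid's bounded differences inequality. Since the statement is two-sided, I would run the argument for $\Phi(S):=\sup_{f\in\cF}\bigl(\EE_z[f(z)]-\frac1n\sum_i f(z_i)\bigr)$ and then repeat verbatim with $f$ replaced by $-f$ (the hypothesis class $-\cF$ has the same Rademacher complexity, since $\varepsilon_i$ and $-\varepsilon_i$ have the same law), finally combining the two one-sided statements by a union bound over the two failure events of probability $\delta/2$ each — or, more cleanly, just state the one-sided bound and note the two-sided version follows by symmetry with the absolute value costing only a relabeling. (I will present the one-sided chain and insert the absolute value at the end, which is the usual mild abuse.)

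First I would establish the concentration step. Viewing $\Phi$ as a function of the independent samples $z_1,\dots,z_n$, replacing a single $z_i$ by $z_i'$ changes $\frac1n\sum_j f(z_j)$ by at most $\frac1n\bigl(|f(z_i)|+|f(z_i')|\bigr)\le \frac{2B}{n}$ uniformly in $f$, hence $\Phi$ has bounded differences with constant $2B/n$ in each coordinate. McDiarmid's inequality then gives $\PP\bigl[\Phi(S)\ge \EE_S[\Phi(S)] + t\bigr]\le \exp\!\bigl(-\frac{2t^2}{n\,(2B/n)^2}\bigr)=\exp\!\bigl(-\frac{t^2 n}{2B^2}\bigr)$; setting this equal to $\delta/2$ yields the deviation term $B\sqrt{2\ln(2/\delta)/n}$.

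Next I would bound $\EE_S[\Phi(S)]$ by $2\,\EE_S[\hat\cR_n(\cF)]$ via the standard ghost-sample symmetrization: introduce an independent copy $S'=(z_1',\dots,z_n')$, write $\EE_z[f(z)]=\EE_{S'}[\frac1n\sum_i f(z_i')]$, pull the supremum outside the (conditional) expectation using Jensen, so that $\EE_S[\Phi(S)]\le \EE_{S,S'}\bigl[\sup_{f}\frac1n\sum_i (f(z_i')-f(z_i))\bigr]$; then insert i.i.d. Rademacher signs $\varepsilon_i$, using that $f(z_i')-f(z_i)$ is symmetric and exchanging $z_i\leftrightarrow z_i'$ corresponds to flipping $\varepsilon_i$, to get $\EE_{S,S',\varepsilon}\bigl[\sup_f \frac1n\sum_i \varepsilon_i(f(z_i')-f(z_i))\bigr]$; finally split the supremum of the sum into the sum of two suprema (subadditivity of $\sup$) and note the two resulting terms are equal, each being $\EE_S[\hat\cR_n(\cF)]$. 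This produces the factor $2$. Combining this with the concentration bound gives the claimed inequality.

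I do not anticipate a serious obstacle here — this is a textbook result — but the one genuinely careful point is bookkeeping the two-sided statement: the absolute value $|\frac1n\sum_i f(z_i)-\EE_z f(z)|$ is $\max(\Phi(S),\tilde\Phi(S))$ where $\tilde\Phi$ uses $-\cF$, and one must be sure the constant in front of the Rademacher term does not double. It does not, because $\hat\cR_n(-\cF)=\hat\cR_n(\cF)$ (using the symmetry of the Rademacher variables, not the symmetry of $\cF$), so the same bound $2\EE_S[\hat\cR_n(\cF)]$ controls both one-sided suprema, and a union bound over two events of probability $\delta/2$ preserves the stated form with $\ln(2/\delta)$. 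The only other place to be mildly attentive is that McDiarmid requires the bounded-difference property to hold for \emph{every} realization, which is exactly where the uniform bound $|f(z)|\le B$ (hypothesis of the theorem) is used.
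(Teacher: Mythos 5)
Your proof is correct: the symmetrization argument giving the factor $2\,\EE_S[\hat{\cR}_n(\cF)]$, the McDiarmid step with bounded-difference constant $2B/n$, and the $\delta/2$ union bound for the two-sided statement all check out and yield exactly the stated $B\sqrt{2\ln(2/\delta)/n}$ term. The paper does not prove this theorem itself but quotes it from \cite{shalev2014understanding}, and your argument is essentially the same standard symmetrization-plus-concentration proof given there, so there is nothing further to reconcile.
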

Let $\cF_Q = \{ f(x;\theta)\,|\, \|\theta\|_{\cP}\leq Q\}$ denote all the two-layer networks with path norm bounded by $Q$. It was proved in \cite{neyshabur2015norm} that 
\begin{equation}
    \hat{\cR}_n(\cF_Q) \leq 2Q \sqrt{\frac{2\ln(2d)}{n}}.
\end{equation}
By combining the above result withTheorem~\ref{thm: gen-err-rademacher-complexity}, we  obtain the following a posterior bound of the generalization gap for two-layer neural networks. The proof is deferred to Appendix B.

\begin{theorem}[A posterior generalization bound]\label{thm: posterior-gen-gap}
Assume that the loss function $\ell(\cdot,y)$ is $A-$Lipschitz continuous and bounded by $B$. Then
for any  $\delta > 0$, with probability  at least $1-\delta$ over the choice of the training set $S$, we have, for any two-layer network $f(\cdot;\theta)$, 
 \begin{align}\label{eqn: pgb}
    |L(\theta)-\hat{L}_n(\theta)| \leq &\,\,  4A \sqrt{\frac{2\ln(2 d)}{n}} \left(\|\theta\|_{\cP}+1\right)\\
    &\quad + B\sqrt{\frac{2\ln(2c (\|\theta\|_{\cP}+1)^2/\delta)}{n}},
\end{align}
where $c=\sum_{k=1}^{\infty}1/k^2$.
\end{theorem}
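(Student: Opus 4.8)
The plan is to combine the uniform Rademacher bound for the classes $\cF_Q$ with a union bound over a dyadic scale of values of $Q$, so that the final estimate adapts to the actual path norm $\|\theta\|_{\cP}$ of the network at hand. First I would fix $Q$ and apply Theorem~\ref{thm: gen-err-rademacher-complexity} to the composed hypothesis class $\ell\circ\cF_Q=\{(x,y)\mapsto \ell(f(x;\theta),y): \|\theta\|_{\cP}\le Q\}$: the loss values are bounded by $B$, and by the contraction lemma for Rademacher complexities (Talagrand's lemma, valid since $\ell(\cdot,y)$ is $A$-Lipschitz) we get $\EE_S[\hat\cR_n(\ell\circ\cF_Q)]\le A\,\EE_S[\hat\cR_n(\cF_Q)]\le 2AQ\sqrt{2\ln(2d)/n}$, using the cited bound of \cite{neyshabur2015norm}. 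Hence for each fixed $Q$, with probability at least $1-\delta'$,
\[
\sup_{\|\theta\|_{\cP}\le Q}|L(\theta)-\hat L_n(\theta)|\le 4AQ\sqrt{\frac{2\ln(2d)}{n}}+B\sqrt{\frac{2\ln(2/\delta')}{n}}.
\]

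Next I would discretize: set $Q_k=k$ for $k=1,2,3,\dots$ (or any convenient increasing sequence covering $[1,\infty)$), allocate failure probability $\delta_k=\delta/(c\,k^2)$ with $c=\sum_{k\ge 1}k^{-2}$ so that $\sum_k\delta_k=\delta$, and take a union bound over all $k$. Then with probability at least $1-\delta$, simultaneously for every $k$ and every $\theta$ with $\|\theta\|_{\cP}\le Q_k$,
\[
|L(\theta)-\hat L_n(\theta)|\le 4Ak\sqrt{\frac{2\ln(2d)}{n}}+B\sqrt{\frac{2\ln(2ck^2/\delta)}{n}}.
\]
Given an arbitrary $\theta$, I would then pick $k=k(\theta):=\lceil \|\theta\|_{\cP}+1\rceil$ or more simply the smallest integer with $Q_k\ge \|\theta\|_{\cP}$; this $k$ satisfies $k\le \|\theta\|_{\cP}+1$, so substituting it into the displayed bound replaces $k$ by $\|\theta\|_{\cP}+1$ in both terms, yielding exactly \eqref{eqn: pgb}.

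The only genuinely delicate points are bookkeeping rather than conceptual. One is the contraction step: Theorem~\ref{thm: gen-err-rademacher-complexity} is stated for a class of functions of the sample point $z=(x,y)$, whereas the bound of \cite{neyshabur2015norm} is for the network outputs $f(x;\theta)$ directly, so I need Talagrand's contraction inequality to pass from $\hat\cR_n(\cF_Q)$ to $\hat\cR_n(\ell\circ\cF_Q)$ and pay the Lipschitz factor $A$; one must be slightly careful that the constant is $A$ and not $2A$ depending on the precise version used, which is why the statement carries a clean factor $4A$ (a factor $2$ from Theorem~\ref{thm: gen-err-rademacher-complexity} and a factor $2$ absorbing the $\sqrt{2}$ and the contraction). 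The other is ensuring the chosen dyadic grid actually covers all relevant $Q\in[1,\infty)$ and that $k(\theta)\le\|\theta\|_{\cP}+1$, which forces the choice $Q_k=k$ (a geometric grid would give a $\log$ factor instead of the clean linear dependence). Since $\|\theta\|_{\cP}+1\ge 1$ always, no network is missed. With these two observations in place the rest is direct substitution, and the proof is complete.
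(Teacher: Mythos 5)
Your proposal is correct and follows essentially the same route as the paper: a fixed-$Q$ bound via the path-norm Rademacher estimate plus contraction, then a union bound over the integer grid $Q_k=k$ with failure probabilities $\delta/(c\,k^2)$, and finally choosing the smallest admissible $k\le\|\theta\|_{\cP}+1$. The only cosmetic difference is that you call the grid ``dyadic'' while actually (and correctly, as in the paper) using the arithmetic grid $Q_k=k$.
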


We  see that the generalization gap is  bounded roughly by $\|\theta\|_{\cP}/\sqrt{n}$ up to some logarithmic terms.

\subsection{Proof for the noiseless case}
\label{sec: priori-analysis}
The intuition is as follows. The path norm of the special solution $\tilde{\theta}$ which achieves the optimal approximation error is independent of the network width, and this norm can also be used to bound the generalization gap (Theorem~\ref{thm: posterior-gen-gap}). Therefore, if the path norm is suitably penalized during training, we should be able to control the generalization gap without harming the approximation accuracy.

We first have the estimate for the regularized risk of $\tilde{\theta}$. 
\begin{proposition}\label{pro: contructed-sol}
Let $\tilde{\theta}$ be the network constructed in Theorem~\ref{pro: approximation}, and $\lambda\geq \lambda_n$.
 Then  with probability at least $1-\delta$, we have
\begin{equation}\label{eqn: special-solution}
J_{\lambda}(\tilde{\theta}) \leq L(\tilde{\theta})  + 8\lambda \hat{\gamma}_2(f^*) + 2\sqrt{\frac{2\ln(2c/\delta)}{n}}
\end{equation}
\end{proposition}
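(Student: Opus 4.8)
The plan is to combine the approximation theorem (Theorem~\ref{pro: approximation}) with the a posteriori generalization bound (Theorem~\ref{thm: posterior-gen-gap}), both applied to the single fixed network $\tilde\theta$; no fresh probabilistic argument is needed. In the noiseless case the loss $\ell(u,y)=\tfrac12(u-y)^2$ with $u=f(x;\tilde\theta)\in[0,1]$ (after truncation) and $y=f^*(x)\in[0,1]$ is bounded by $B=\tfrac12$ and is $1$-Lipschitz in its first argument, so Theorem~\ref{thm: posterior-gen-gap} with $A=1$ gives, with probability at least $1-\delta$,
\[
\hat L_n(\tilde\theta) \le L(\tilde\theta) + 4\sqrt{\tfrac{2\ln(2d)}{n}}\,\big(\|\tilde\theta\|_{\cP}+1\big) + \tfrac12\sqrt{\tfrac{2\ln\big(2c(\|\tilde\theta\|_{\cP}+1)^2/\delta\big)}{n}}.
\]
Adding $\lambda(\|\tilde\theta\|_{\cP}+1)$ to both sides converts the left-hand side into $J_\lambda(\tilde\theta)$, so the task reduces to bounding the three $\|\tilde\theta\|_{\cP}$-dependent terms on the right.

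Next I would substitute the explicit norm control $\|\tilde\theta\|_{\cP}\le 2\gamma_2(f^*)$ from \eqref{eqn: explicit-norm}. Since $\hat\gamma_2(f^*)=\max\{1,\gamma_2(f^*)\}\ge 1$, this gives $\|\tilde\theta\|_{\cP}+1\le 2\gamma_2(f^*)+1\le 3\hat\gamma_2(f^*)$. Because $4\sqrt{2\ln(2d)/n}=\lambda_n\le\lambda$, the first generalization term is at most $3\lambda\hat\gamma_2(f^*)$, and the added regularizer $\lambda(\|\tilde\theta\|_{\cP}+1)$ is likewise at most $3\lambda\hat\gamma_2(f^*)$.

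The only slightly delicate step --- the one I would flag as the main (minor) obstacle --- is the logarithmic term, whose argument now depends on $\gamma_2(f^*)$. I would split it with $\sqrt{a+b}\le\sqrt a+\sqrt b$ into $\tfrac12\sqrt{2\ln(2c/\delta)/n}+\tfrac12\sqrt{4\ln(3\hat\gamma_2(f^*))/n}$. The first piece is at most $2\sqrt{2\ln(2c/\delta)/n}$, exactly the remainder term in the claim. For the second piece, the elementary inequalities $\ln(3t)\le 2t\le 2t^2$ for $t\ge 1$ give $\tfrac12\sqrt{4\ln(3\hat\gamma_2(f^*))/n}\le\sqrt2\,\hat\gamma_2(f^*)/\sqrt n$, and since $\ln(2d)\ge1$ implies $1/\sqrt n\le\lambda_n/(4\sqrt2)\le\lambda/(4\sqrt2)$, this piece is at most $\tfrac14\lambda\hat\gamma_2(f^*)$. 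Summing the coefficients of $\lambda\hat\gamma_2(f^*)$ (namely $3+3+\tfrac14$) shows the total is at most $8\lambda\hat\gamma_2(f^*)$, which together with the $2\sqrt{2\ln(2c/\delta)/n}$ term is precisely \eqref{eqn: special-solution}.
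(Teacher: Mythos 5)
Your proposal is correct and follows essentially the same route as the paper's proof: apply the a posteriori bound of Theorem~\ref{thm: posterior-gen-gap} to the fixed network $\tilde\theta$, add the regularization term, invoke $\|\tilde\theta\|_{\cP}\leq 2\gamma_2(f^*)$ from \eqref{eqn: explicit-norm}, and absorb the logarithmic dependence on $\hat\gamma_2(f^*)$ using $\lambda\geq\lambda_n\gtrsim 1/\sqrt{n}$. The only differences are bookkeeping constants (you take the loss bound $B=\tfrac12$ after truncation where the paper uses $B=2$, and you use $\ln(3t)\le 2t^2$ where the paper uses $\ln(1+a)\le a$), which do not change the argument.
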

\begin{proof}
First $\ell(y,y_i)=\frac{1}{2}(y-y_i)^2$ is $1$-Lipschitz continuous and bounded by $2$.
According to Definition~\ref{def: estimator} and the property that $ \|\tilde{\theta}\|_{\cP} \leq 2 \gamma_2(f^*)$,  the regularized risk of $\tilde{\theta}$ satisfies
\begin{align}\label{eqn: 111}\nonumber
J_{\lambda}(\tilde{\theta}) & \stackrel{}{=}  \hat{L}_n(\tilde{\theta}) + \lambda  (\|\tilde{\theta}\|_{\cP} +1)\\ \nonumber 
    &\leq L(\tilde{\theta}) + (\lambda_n+\lambda) (\|\tilde{\theta}\|_{\cP}+1)  + 2\sqrt{\frac{2\ln(2c (\|\tilde{\theta}\|_{\cP}+1)^2/\delta)}{n}} \\
    &\stackrel{}{\leq} L(\tilde{\theta}) +  6\lambda \hat{\gamma}_2(f^*) + 2\sqrt{\frac{2\ln(2c(1+2\gamma_2(f^*))^2/\delta)}{n}}.
\end{align}
 The last term can be simplified by using $\sqrt{a+b}\leq \sqrt{a}+\sqrt{b}$ and $\ln(1+a)\leq a$ for $a\geq 0, b\geq 0$. So we have 
\begin{eqnarray*}
\sqrt{2\ln(2c(1+2\gamma_2(f^*))^2/\delta)} &\leq& \sqrt{2\ln(2c/\delta)}+3\hat{\gamma}_2(f^*).
\end{eqnarray*}
Plugging it into Equation~\eqref{eqn: 111} completes the proof.
\end{proof}

\begin{proposition}[Properties of regularized solutions]\label{pro: reg-estimator}
The regularized estimator $\hat{\theta}_{n,\lambda}$ satisfies:
\begin{align*}
    J_{\lambda}(\hat{\theta}_{n,\lambda}) & \leq J_{\lambda}(\tilde{\theta}) \\
    \|\hat{\theta}_{n,\lambda}\|_{\cP} & \leq \frac{L(\tilde{\theta})}{\lambda} +8 \hat{\gamma}_2(f^*)+\frac{1}{2}\sqrt{\ln(2c/\delta)}
\end{align*}
\end{proposition}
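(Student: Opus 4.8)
The first inequality is immediate and requires no work: by Definition~\ref{def: estimator}, $\hat{\theta}_{n,\lambda}$ is a minimizer of $J_\lambda$, so in particular $J_\lambda(\hat{\theta}_{n,\lambda}) \le J_\lambda(\theta)$ for every $\theta$, and we just take $\theta = \tilde{\theta}$. The content of the proposition is the bound on $\|\hat{\theta}_{n,\lambda}\|_{\cP}$, which I would derive by a short chain of inequalities.

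The plan is to start from the trivial observation that $\hat{L}_n \ge 0$ (the loss $\ell$ is a square), so that
\[
\lambda\big(\|\hat{\theta}_{n,\lambda}\|_{\cP} + 1\big) \le \hat{L}_n(\hat{\theta}_{n,\lambda}) + \lambda\big(\|\hat{\theta}_{n,\lambda}\|_{\cP}+1\big) = J_\lambda(\hat{\theta}_{n,\lambda}).
\]
Then I apply the first part of the proposition, $J_\lambda(\hat{\theta}_{n,\lambda}) \le J_\lambda(\tilde{\theta})$, followed by Proposition~\ref{pro: contructed-sol}, which gives (on the event of probability at least $1-\delta$)
\[
J_\lambda(\tilde{\theta}) \le L(\tilde{\theta}) + 8\lambda\hat{\gamma}_2(f^*) + 2\sqrt{\tfrac{2\ln(2c/\delta)}{n}}.
\]
Combining these and dividing by $\lambda$ yields
\[
\|\hat{\theta}_{n,\lambda}\|_{\cP} \le \frac{L(\tilde{\theta})}{\lambda} + 8\hat{\gamma}_2(f^*) + \frac{2}{\lambda}\sqrt{\tfrac{2\ln(2c/\delta)}{n}} - 1,
\]
and dropping the $-1$ only weakens the bound.

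It remains to absorb the last term into $\tfrac12\sqrt{\ln(2c/\delta)}$. Here I use the hypothesis $\lambda \ge \lambda_n = 4\sqrt{2\ln(2d)/n}$, which gives $\tfrac{2}{\lambda} \le \tfrac12\sqrt{\tfrac{n}{2\ln(2d)}}$, hence
\[
\frac{2}{\lambda}\sqrt{\frac{2\ln(2c/\delta)}{n}} \le \frac12\sqrt{\frac{\ln(2c/\delta)}{\ln(2d)}} \le \frac12\sqrt{\ln(2c/\delta)},
\]
the last step using the standing assumption $\ln(2d) \ge 1$. This completes the bound. The argument is essentially bookkeeping; the only "obstacle" worth noting is making sure the numerical constants line up, in particular that the lower bound on $\lambda$ together with $\ln(2d)\ge 1$ is exactly what is needed to convert the $1/\sqrt{n}$ concentration term into the clean $\tfrac12\sqrt{\ln(2c/\delta)}$ appearing in the statement.
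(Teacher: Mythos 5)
Your proposal is correct and follows essentially the same route as the paper: the first claim from minimality of $\hat{\theta}_{n,\lambda}$, then $\lambda(\|\hat{\theta}_{n,\lambda}\|_{\cP}+1)\le J_\lambda(\hat{\theta}_{n,\lambda})\le J_\lambda(\tilde{\theta})$ combined with Proposition~\ref{pro: contructed-sol}. In fact you spell out the bookkeeping the paper leaves implicit, namely using $\lambda\ge\lambda_n$ and $\ln(2d)\ge 1$ to absorb $\tfrac{2}{\lambda}\sqrt{2\ln(2c/\delta)/n}$ into $\tfrac12\sqrt{\ln(2c/\delta)}$, which is exactly right.
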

\begin{proof}
The first claim follows from the definition of $\hat{\theta}_n$.
For the second claim, note that
 \[
 \lambda (\|\hat{\theta}_{n,\lambda}\|_{\cP} +1)\leq J_{\lambda}(\hat{\theta}_n) \leq J_{\lambda}(\tilde{\theta}),
 \]
  Applying Proposition~\ref{pro: contructed-sol} completes the proof.
 \end{proof}
\vspace*{1mm}
\begin{remark}
The above proposition establishes the connection between the regularized solution and the special solution $\tilde{\theta}$ constructed in Proposition~\ref{pro: approximation}. In particular, by taking $\lambda=t\lambda_n$ with $t\geq 1$  the generalization gap of the regularized solution is bounded by   $\frac{\|\hat{\theta}_{n,\lambda}\|_{\cP}}{\sqrt{n}} \to L(\tilde{\theta})/(t\sqrt{\ln 2d})$ as $n\to\infty$, up to some constant. This suggests that our regularization term is  appropriate, and it forces the generalization gap to be  roughly in the same order as the approximation error. 
\end{remark}

\begin{proof}({\bf Proof of Theorem~\ref{thm: priori-err-1}}) Now we are ready to prove the main result.
 Following the a posteriori generalization bound given in Theorem~\ref{thm: posterior-gen-gap}, we have with probability at least $1-\delta$,
\begin{align*}
L(\hat{\theta}_{n,\lambda}) 
&\stackrel{}{\leq} \hat{L}_n(\hat{\theta}_{n,\lambda})+\lambda_n(\|\hat{\theta}_{n,\lambda}\|_{\cP}+1)  +3 Q_n \\
&\stackrel{(1)}{\leq} J_{\lambda}(\hat{\theta}_{n,\lambda}) +  3Q_n,
\end{align*}
where $Q_n=\sqrt{\ln(2c(1+\|\hat{\theta}_{n,\lambda}\|)^2/\delta)/n}$. The inequality (1) is due to the choice $\lambda\geq \lambda_n$.  The first term can be bounded by $J_{\lambda}(\hat{\theta}_{n,\lambda})\leq J_{\lambda}(\tilde{\theta})$, which is given by Proposition~\ref{pro: contructed-sol}. It remains  to bound $Q_n$,
\begin{align*}
\sqrt{n}Q_n&\leq \sqrt{\ln(2nc/\delta)} + \sqrt{2\ln(1+n^{-1/2}\|\hat{\theta}_{n,\lambda}\|_{\cP})}\\
&\leq  \sqrt{\ln(2nc/\delta)} + \sqrt{2\|\hat{\theta}_{n,\lambda}\|_{\cP}/\sqrt{n}}.
\end{align*}
By Proposition~\ref{pro: reg-estimator}, we have 
\begin{align*}
    \sqrt{\frac{2\|\hat{\theta}_{n,\lambda}\|_{\cP}}{\sqrt{n}}} &\leq \sqrt{\frac{2(L(\tilde{\theta})/\lambda +8 \hat{\gamma}_2(f^*)+0.5\sqrt{\ln(2c/\delta)})}{\sqrt{n}}}\\
    &\leq \sqrt{\frac{2L(\tilde{\theta})}{\lambda n^{1/2}}} + \frac{3\hat{\gamma}_2(f^*)}{n^{1/4}} + \left(\frac{\ln(1/\delta)}{n}\right)^{1/4}.
\end{align*}
Thus after some simplification, we  obtain
\begin{align}\label{eqn: mm}
Q_n\leq 2\sqrt{\frac{\ln(n/\delta)}{n}} + \sqrt{\frac{2L(\tilde{\theta})}{\lambda n^{3/2}}} + \frac{3\hat{\gamma}_2(f^*)}{\sqrt{n}} .
\end{align}

By combining Equation~\eqref{eqn: special-solution} and ~\eqref{eqn: mm}, we obtain
\begin{align*}
    L(\hat{\theta}_n) &\lesssim L(\tilde{\theta})  + 8\lambda \hat{\gamma}_2(f^*) + \frac{3}{\sqrt{n}}\Big(\sqrt{\frac{L(\tilde{\theta})}{n^{1/2}\lambda}} + \hat{\gamma}_2(f^*) + \sqrt{\ln(n/\delta)} \Big).
\end{align*}
By applying  $L(\tilde{\theta})\leq 3\gamma_2^2(f^*)/m$, we complete the proof.
\end{proof}

\subsection{Proof for the noisy case}
We need the following lemma.  The proof  is deferred to Appendix D.
\begin{lemma}\label{lemma: noise}
Under Assumption~\ref{assump: noise}, we have
$$
    \sup_{\theta} |L(\theta) - L_{B_n}(\theta)| \leq \frac{2c_0\sigma^2}{\sqrt{n}},
$$
\end{lemma}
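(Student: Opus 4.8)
The plan is to bound the pointwise (in $\theta$) difference $|L(\theta)-L_{B_n}(\theta)|$ by the expected excess of the squared loss over its truncation level $B_n^2/2$, and then show this expectation is small because the event $\{\ell(f(x;\theta),y)>B_n^2/2\}$ forces the noise $\xi$ to be large, at which point Assumption 4.3 gives exponential decay. Since $0\le f(x;\theta)\le 1$ (recall $f$ denotes the truncated network $Tf$) and $0\le f^*\le 1$, we have $|f(x;\theta)-y|\le 1+|\xi|$, hence $\ell(f(x;\theta),y)=\tfrac12(f(x;\theta)-y)^2\le \tfrac12(1+|\xi|)^2$. Therefore
\[
0\le L(\theta)-L_{B_n}(\theta)=\EE_{x,y}\Big[\big(\ell(f(x;\theta),y)-\tfrac{B_n^2}{2}\big)_+\Big]\le \EE\Big[\big(\tfrac12(1+|\xi|)^2-\tfrac{B_n^2}{2}\big)_+\Big],
\]
and the right side no longer depends on $\theta$, which immediately yields the uniform bound once the expectation is controlled.

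Next I would estimate the scalar expectation $\EE[(\tfrac12(1+|\xi|)^2-\tfrac{B_n^2}{2})_+]$. The integrand is nonzero only when $1+|\xi|>B_n$, i.e. $|\xi|>B_n-1=\max\{\tau_0,\sigma^2\ln n\}\ge \tau_0$, so Assumption 4.3 applies: $\PP[|\xi|>t]\le c_0 e^{-t^2/\sigma}$ for $t\ge \tau_0$. Writing the expectation of a nonnegative quantity as the integral of its tail, one gets something like
\[
\EE\Big[\big(\tfrac12(1+|\xi|)^2-\tfrac{B_n^2}{2}\big)_+\Big]=\int_0^\infty \PP\Big[\tfrac12(1+|\xi|)^2-\tfrac{B_n^2}{2}>s\Big]\,ds
\le \int_0^\infty \PP\big[|\xi|>\sqrt{B_n^2+2s}-1\big]\,ds,
\]
and after the substitution $t=\sqrt{B_n^2+2s}-1$ (so $s=\tfrac12((t+1)^2-B_n^2)$, $ds=(t+1)\,dt$), this becomes $\int_{B_n-1}^\infty (t+1)\,\PP[|\xi|>t]\,dt\le c_0\int_{B_n-1}^\infty (t+1)e^{-t^2/\sigma}\,dt$. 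Since $B_n-1\ge \sigma^2\ln n$, the Gaussian tail makes this integral at most of order $e^{-(\sigma^2\ln n)^2/\sigma}=e^{-\sigma^3(\ln n)^2}$, times a polynomial-in-$B_n$ prefactor; this decays faster than any power of $1/n$, so in particular it is $\le 2c_0\sigma^2/\sqrt n$ for all $n$ (absorbing universal constants and using that $B_n$ grows only logarithmically). Combining with the first display gives $\sup_\theta|L(\theta)-L_{B_n}(\theta)|\le 2c_0\sigma^2/\sqrt n$, as claimed.

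The main obstacle is purely the bookkeeping in the second step: matching the crude exponential tail bound against the stated constant $2c_0\sigma^2/\sqrt n$ while keeping track of the polynomial prefactors $(t+1)$ and the precise role of the "$+1$" in $B_n=1+\max\{\tau_0,\sigma^2\ln n\}$, which is exactly what guarantees the truncation level sits safely above the region where Assumption 4.3 is valid. None of this is conceptually hard — the key structural point, that the $\theta$-dependence disappears after using $|f(x;\theta)|\le 1$ and $|f^*|\le 1$, is what makes the lemma work; the rest is choosing $B_n$ large enough (logarithmically) that a sub-Gaussian tail beats $n^{-1/2}$, which it does with enormous room to spare.
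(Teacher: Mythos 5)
Your overall route is the same as the paper's: use $0\le f(x;\theta)\le 1$ and $0\le f^*\le 1$ to remove the $\theta$-dependence (replacing $|f(x;\theta)-y|$ by $1+|\xi|$), then write the excess over the truncation level as a tail integral and invoke Assumption~\ref{assump: noise}, which is legitimate on the whole integration range because the threshold exceeds $\tau_0$ by the definition of $B_n$. Up to cosmetic differences (you work with $(\ell-B_n^2/2)_+$ and integrate in $t$, the paper works with $(Z^2-B_n^2)\mathbf{1}_{|Z|\ge B_n}$ and integrates in $t^2$, which lets it evaluate the tail integral in closed form as $2c_0\sigma^2 e^{-B_n^2/(2\sigma^2)}$), the two arguments are structurally identical.

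The genuine gap is your final step. From ``the integral is at most of order $e^{-\sigma^3(\ln n)^2}$ times a polynomial prefactor, which decays faster than any power of $1/n$'' you cannot conclude ``hence it is $\le 2c_0\sigma^2/\sqrt n$ for all $n$'': an asymptotic rate says nothing about the constant at finite $n$, and here the asserted inequality actually fails in some parameter regimes. Your reduction lowers the effective threshold to $B_n-1=\max\{\tau_0,\sigma^2\ln n\}$, which can be arbitrarily small; when $\sigma$ and $\tau_0$ are small and $n$ is moderate, $c_0\int_{B_n-1}^\infty(t+1)e^{-t^2/\sigma}\,dt$ is of order $c_0(\sqrt{\sigma}+\sigma)$, far larger than $2c_0\sigma^2/\sqrt n$. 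What rescues the stated constant in the paper is precisely the ``$+1$'' in $B_n$ that you discard when passing to the event $|\xi|>B_n-1$: keeping (essentially) $B_n$ in the exponent, one has $B_n\ge 1+\sigma^2\ln n\ge 2\sigma\sqrt{\ln n}$ by the AM--GM inequality, hence $e^{-B_n^2/(2\sigma^2)}\le n^{-2}\le n^{-1/2}$ uniformly in $\sigma$ and $n$, whereas with only $(B_n-1)^2$ in the exponent the quantity $\sigma^3(\ln n)^2$ beats $\tfrac12\ln n$ only once $\ln n\gtrsim\sigma^{-3}$. To be fair, the paper's own write-up is loose at the very same spot (it drops the ``$-2$'' shift when applying the tail bound, and the exponent $e^{-t^2/\sigma}$ of the assumption silently becomes $e^{-s^2/(2\sigma^2)}$ in the proof), so you have captured the intended mechanism; but as written your last inequality is asserted rather than proved, and with your bookkeeping (threshold $B_n-1$, exponent $t^2/\sigma$) it cannot be made to hold with the stated constant uniformly.
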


Therefore we have, 
$$L(\theta)= L(\theta)-L_{B_n}(\theta) + L_{B_n}(\theta)\leq \frac{2c_0\sigma^2}{\sqrt{n}} + L_{B_n}(\theta)$$
This suggests that as long as we can bound the truncated population risk,  the original risk  will be bounded accordingly. 

\begin{proof}({\bf Proof of Theorem~\ref{thm: priori-err-2}})
The proof is almost the same as the noiseless case.
The loss function $\ell(y,y_i)\wedge B^2/2$ is $B$-Lipschitz continuous and bounded by $B^2/2$. 
By analogy with the proof of Proposition~\ref{pro: contructed-sol}, we  obtain that with probability at least $1-\delta$ the following inequality holds,
\begin{align}\label{eqn: 333}
J_{\lambda}(\tilde{\theta}) \leq L_{B_n}(\tilde{\theta}) + 8 B_n \lambda \hat{\gamma}_2(f^*) + B_n^2\sqrt{\frac{\ln(2c/\delta)}{n}}.
\end{align} 
Following the proof in Proposition~\ref{pro: reg-estimator},  we similarly obtain $J_{\lambda}(\hat{\theta}_{n,\lambda})  \leq J_{\lambda}(\tilde{\theta})$ and
\begin{align}\label{eqn: 222}
   \|\hat{\theta}_{n,\lambda}\|_{\cP} & \leq \frac{L_{B_n}(\tilde{\theta})}{B_n\lambda} + 8\hat{\gamma}(f^*) + \frac{B_n}{2}\sqrt{\ln(2c/\delta)}.
\end{align}
Following the proof of Theorem~\ref{thm: priori-err-1}, we have 
\begin{align}\label{eqn: aaa}
    L_{B_n}(\hat{\theta}_{n,\lambda}) &\leq J_{\lambda}(\tilde{\theta}) + \frac{B_n^2}{2}\sqrt{2\ln(2c(1+\|\hat{\theta}_{n,\lambda}\|_{\cP})^2/\delta)/n}
\end{align}
Plugging \eqref{eqn: 333} and \eqref{eqn: 222} into \eqref{eqn: aaa},  we get
\begin{align*}
 L_{B_n}(\hat{\theta}_{n,\lambda}) &\leq L_{B_n}(\tilde{\theta}) + 8B_n \hat{\gamma}_2(f^*)\lambda \\
 &\quad + \frac{3B_n^2}{\sqrt{n}} \Big(\sqrt{\frac{L_{B_n}(\tilde{\theta})}{n^{1/2}\lambda}} + \hat{\gamma}_2(f^*)+\sqrt{\ln(n/\delta)}\Big)
\end{align*}
Using  Lemma~\ref{lemma: noise} and the decomposition~\eqref{eqn: decomposition}, we complete the proof.
\end{proof}

\section{Numerical Experiments}
\label{sec: experiments}
In this section, we evaluate  the regularized model using numerical experiments.
We consider two datasets, MNIST\footnote{\url{http://yann.lecun.com/exdb/mnist/}} and CIFAR-10\footnote{\url{https://www.cs.toronto.edu/~kriz/cifar.html}}. Each example in MNIST is a $28\times 28$ grayscale image, while each example in CIFAR-10 is a $32\times 32\times 3$ color image.
For MNIST, we map numbers $\{0,1,2,3,4\}$ to label $0$ and $\{5,6,7,8,9\}$ to $1$. For CIFAR-10, we select the examples with labels $0$ and $1$ to construct our new training and validation sets. Thus, our new MNIST has $60,000$ training examples, and CIFAR-10 has $10,000$ training examples. 

The two-layer ReLU network is initialized using $a_i \sim \mathcal{N}(0,\frac{2\kappa}{m}),\, w_{i,j} \sim \mathcal{N}(0,2\kappa/d)$. We use $\kappa=1$ and train the regularized models using the Adam optimizer~\cite{kingma2014adam} for $T=10,000$ steps, unless it is specified otherwise. The initial learning rate is set to be $0.001$, and it is then multiplied by a decay factor of $0.1$ at $0.7T$ and again at $0.9T$. We set the trade-off parameter $\lambda=0.1\lambda_n$\footnote{Our proof of theoretical results require $\lambda \geq \lambda_n$. However, this condition is not necessarily optimal.} .

\subsection{Shaper bounds for the generalization gap}
Theorem~\ref{thm: posterior-gen-gap} shows that the generalization gap is bounded by $\frac{\|\theta\|_{\cP}}{\sqrt{n}}$ up to some logarithmic terms. Previous works~\cite{neyshabur2018towards,dziugaite2017computing} showed that (stochastic) gradient descent tends to find  solutions with huge norms, causing the a posterior bound to be vacuous. In contrast, our theory suggests there exist good solutions (i.e. solutions with small generalization error) with small norms, and these solutions can be found by the explicit regularization.

To see how this works in practice, we trained both the regularized models and un-regularized models ($\lambda=0$) for fixed network width $m=$10,000. To cover the over-parametrized regime, we also consider the case $n=100$ where $m/n=100 \gg 1$. The results are summarized in Table~\ref{tab: gen-bound}. 

\begin{table}[!h]
\renewcommand{\arraystretch}{1.1}
\centering
\begin{tabular}{c|c|c|c|c|c}
\hline 
dataset &  $\lambda$      &n & training accuracy& testing accuracy  &  $\frac{\|\theta\|_{\cP}}{\sqrt{n}}$ \\\hline \hline
\multirow{4}{*}{CIFAR-10} & \multirow{2}{*}{$0$} & $10^4$ & 100\% & $84.5\%$ & 58 \\ 
                          &                          & 100   & $100\%$ & $70.5\%$ & $507$ \\\cline{2-6}
                          & \multirow{2}{*}{$0.1$} & $10^4$ &  $87.4\%$ & $86.9\%$ & $\textbf{0.14}$ \\
                          &                          &100 & $91.0\%$ & $72.0\%$ & $\textbf{0.43}$\\
\hline
\multirow{4}{*}{MNIST}    & \multirow{2}{*}{$0$} & $6\times 10^4$ & $100\%$ & $98.8\%$ & 58 \\
                          &                          & 100   & $100\%$ &  $78.7\%$ & 162 \\\cline{2-6}
                          & \multirow{2}{*}{$0.1$} & $6\times 10^4$ &  $98.1\%$ & $97.8\%$ & $\textbf{0.27}$ \\
                          &                             & 100 & $100\%$ &  $74.9\%$ & \textbf{0.41} \\
\hline  
\end{tabular}
\caption{Comparison of regularized $(\lambda=0.1)$ and un-regularized $(\lambda=0)$ models.  
For each case, the experiments are repeated for $5$ times, and the mean values are reported.}
\label{tab: gen-bound}
\end{table}

As we can see, the test accuracies of  the regularized and un-regularized solutions are generally comparable, but the values of   $\frac{\|\theta\|_{\cP}}{\sqrt{n}}$, which serve as an upper bound for the generalization gap,  are drastically different. The bounds 
for the un-regularized models are always vacuous, as was observed in \cite{dziugaite2017computing,neyshabur2018towards,arora2018stronger}.  
In contrast, the bounds for the regularized models are always several orders of magnitude smaller than that for the un-regularized models.  This is consistent with the theoretical prediction in Proposition~\ref{pro: reg-estimator}.

To further explore the impact of over-parametrization, we trained various models with different widths. For both datasets, all the training examples are used. In Figure~\ref{fig: pathnorm-width}, we display how the value of $\frac{\|\theta\|_{\cP}}{\sqrt{n}}$ of the learned solution varies with the network width. We find that  for the un-regularized model this quantity increases with network width, whereas for the regularized model it is almost constant. This is consistent with our theoretical result.

\begin{figure}[!h]
\centering
\includegraphics[width=0.4\textwidth]{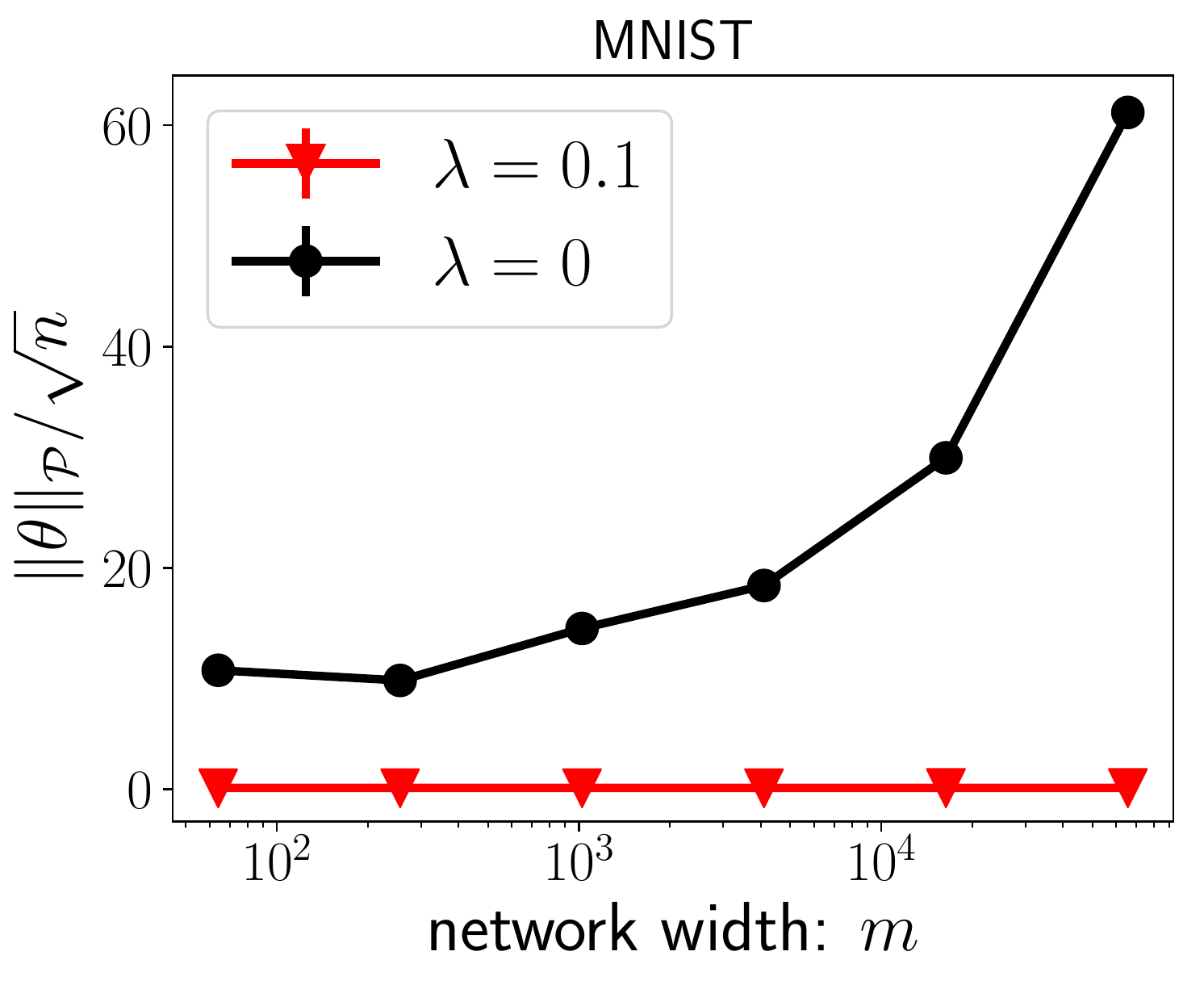} 
\hspace*{5mm}
\includegraphics[width=0.4\textwidth]{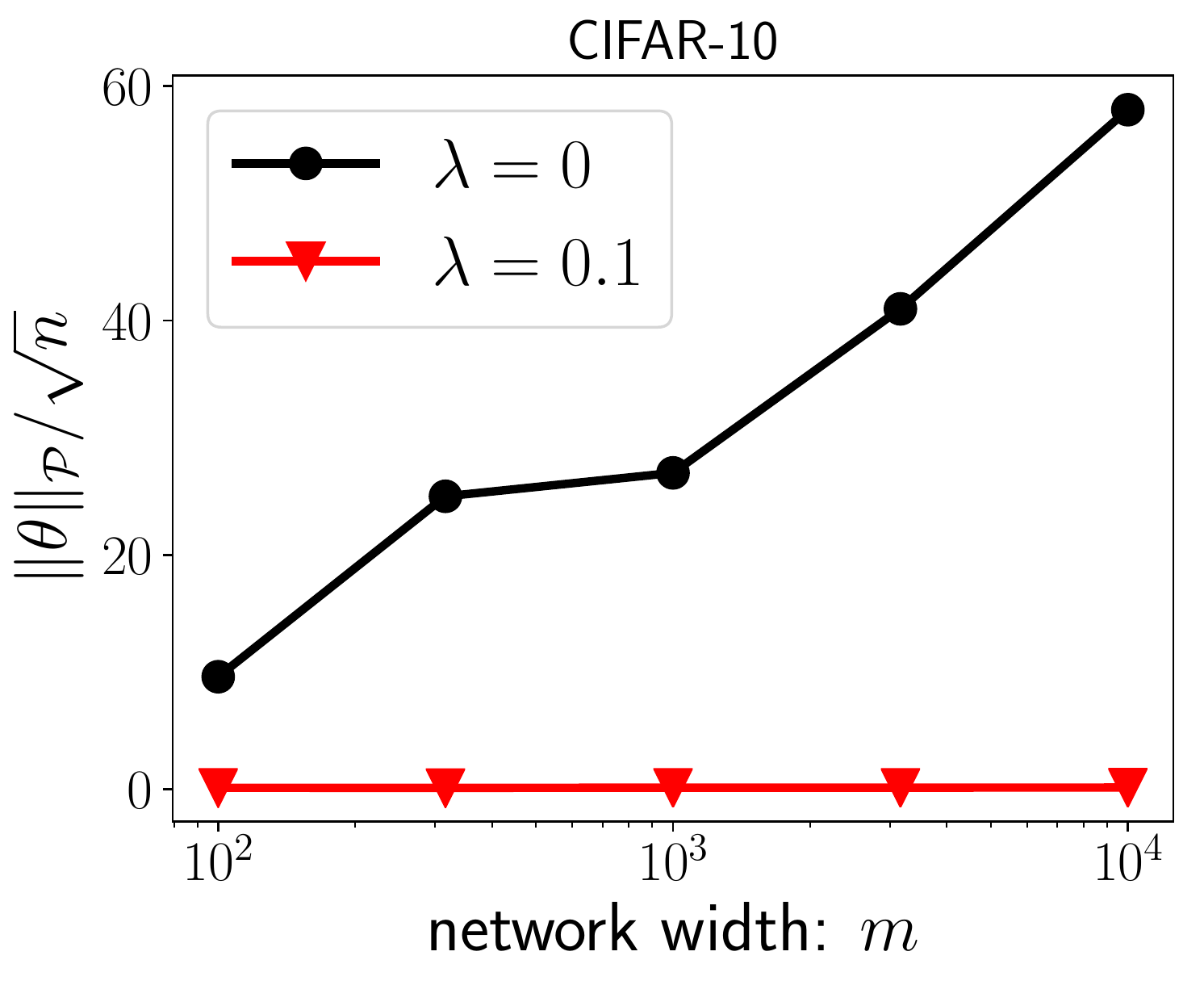}

\caption{ Comparison of the path norms between the regularized and un-regularized solutions for varying widths.}
\label{fig: pathnorm-width}
\end{figure}

\subsection{Dependence on the Initialization}
Since the neural network model  is non-convex, it is interesting to see how  initialization affects the performance of the 
different models, regularized and un-regularized, especially in the over-parametrized regime. To this end, we fix $m=10000, n=100$ and vary  the variance of random initialization $\kappa$. The  results are reported in Figure~\ref{fig: init-testacc}. In general, we find that regularized models are much more stable than the un-regularized models. For large initialization, the regularized model always performs significantly better. 

\begin{figure}[!h]
\centering
\includegraphics[width=0.4\textwidth]{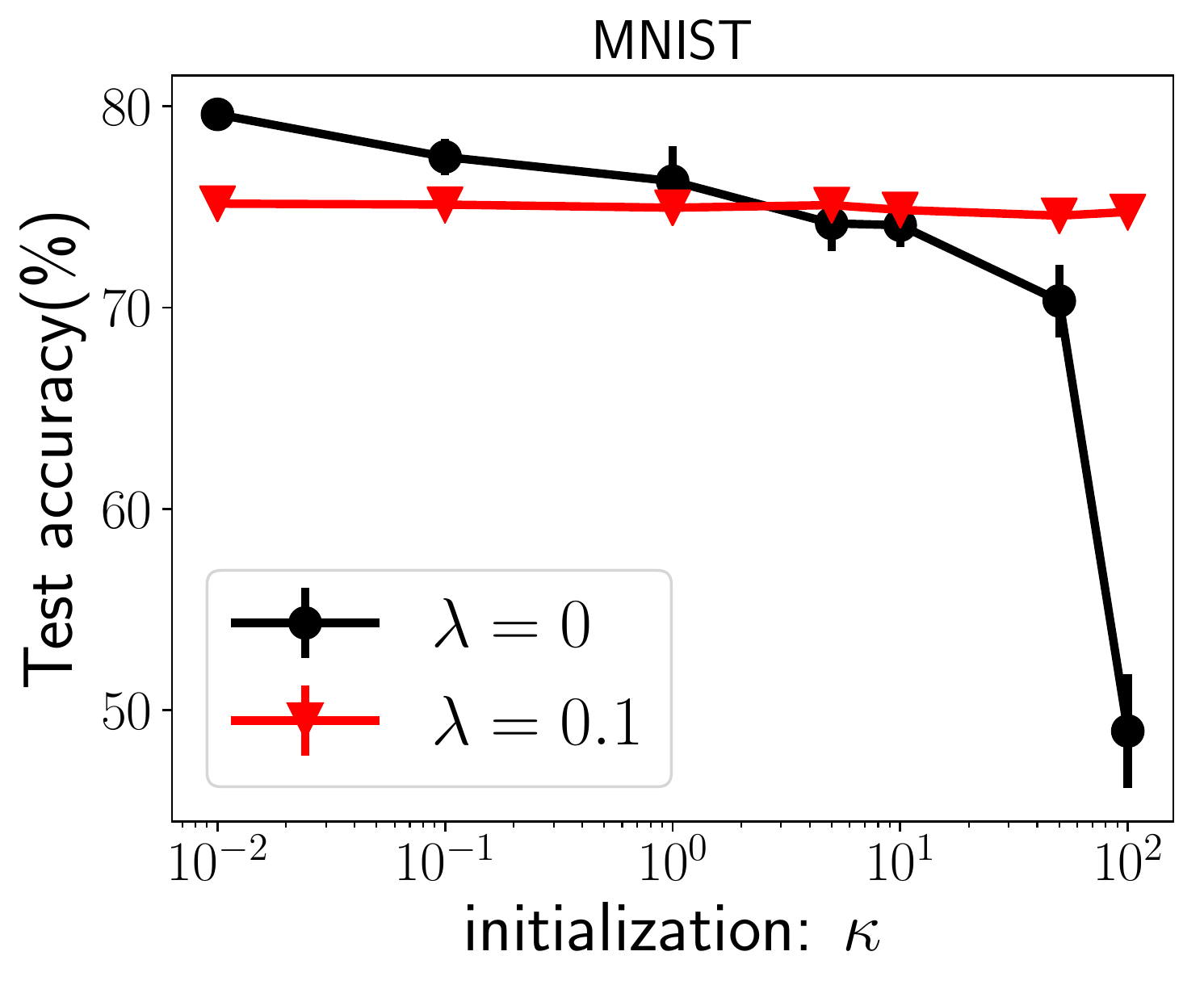} 
\hspace*{5mm}
\includegraphics[width=0.4\textwidth]{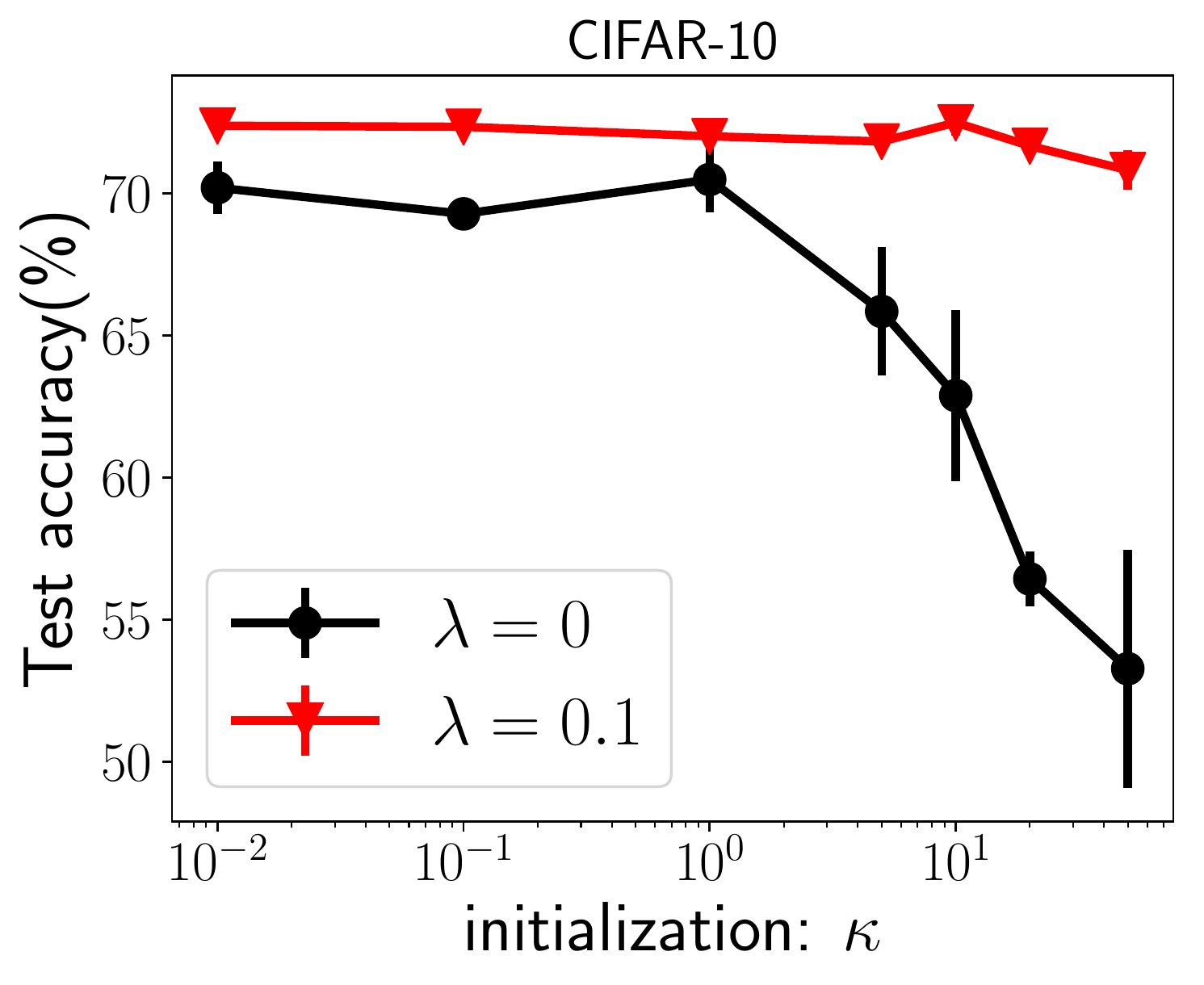}
\caption{Test accuracies of solutions obtained from different initializations. Each experiment is repeated for $5$ times, and we report the mean and standard deviation.}
\label{fig: init-testacc}
\end{figure}

\section{Conclusion}
\label{sec: discuss}
In this paper, we proved nearly optimal a priori estimates of the population risk for learning two-layer neural networks.  
Our results also give some insight regarding the advantage of neural network models over the kernel method.
We should also mention that the main result of this paper has also been extended to deep residual network models in \cite{ma2019priori}.

The most unsatisfactory aspect of our result is that it is proved for the regularized model
since practitioners rely on the so-called implicit regularization.  
At the moment it is unclear where the ``implicit regularization'' comes from and how it actually works. 
Existing works consider special initialization schemes and require strong assumptions on the target function~\cite{brutzkus2018sgd,allen2018learning,daniely2017sgd,leiwu2019two,ma2019analysis}. 
In particular, the work in \cite{leiwu2019two,ma2019analysis} demonstrates clearly that in the regimes considered the neural network models
are no better than the kernel method in terms of implicit regularization.  This is quite unsatisfactory.

There are overwhelming evidence that by tuning the
optimization procedure, including the algorithm, the initialization, the hyper-parameters, etc., one can find 
solutions with superior performance on the test data.
The problem is that excessive tuning and serious experience is required  to find good solutions.
Until we have a good understanding about the mysteries surrounding implicit regularization, the business
of parameter tuning for un-regularized models will remain an art. 
In contrast, the regularized model proposed here is rather robust and much more fool-proof.
Borrowing the terminology from mathematical physics, one is tempted to say that  the regularized model considered here
is ``well-posed'' whereas the un-regularized model is ``ill-posed'' \cite{andre1977solutions}.

\appendix
\section{Proof of Theorem~\ref{pro: approximation}}
\label{sec: appendix-approx}
Without loss of generality, let $(a,\pi)$ be the best representation of $f$, i.e. 
$\gamma_2^2(f)=\EE_{\pi}[|a(w)|^2]$.
Let $U=\{w_j\}_{j=1}^m$ be {\it i.i.d.} random variables sampled from $\pi(\cdot)$, and define 
\[
    \hat{f}_U(x) = \frac{1}{m}\sum_{j=1}^m a(w_j) \sigma(\langle w_j,x\rangle).
\]
Let $L_U=\EE_{x}|\hat{f}_U(x)-f(x)|^2$ denote the population risk,  we have
\begin{align*}
    \EE_{U}[L_U] &= \EE_{x}\EE_{U}|\hat{f}_U(x)-f(x)|^2 \\
    &= \frac{1}{m^2}\EE_{x}\sum_{j,l=1}^m \EE_{w_j,w_l}[(a(w_j)\sigma(\langle w_j,x \rangle)-f(x))(a(w_l)\sigma(\langle w_l,x \rangle)-f(x))]\\
    &\leq \frac{\gamma_2^2(f)}{m}.
\end{align*}
On the other hand, denote  the path norm of $\hat{f}_U(x)$ by $A_U$, we have $\EE_{U}[A_U]=\gamma_1(f)\leq \gamma_2(f)$. 

Define the event $E_1=\{L_U < \frac{3 \gamma_2^2(f)}{m}\}$, and $E_2=\{A_U < 2\gamma_1(f)\}$.
By Markov's inequality, we have 
\begin{align*}
    \PP\{E_1\} &=1 -\PP\{L_U \geq\frac{3 \gamma_2^2(f)}{m}\} \geq  1 - \frac{\EE_U[L(U)]}{3\gamma^2_2(f)/m}\geq \frac{2}{3} \\
    \PP\{E_2\}&= 1- \PP\{A_U \geq 2 \gamma_2(f)\} \geq  1- \frac{\EE[A_U]}{2\gamma_2(f)}\geq \frac{1}{2}.
\end{align*}
Therefore, we have the probability of two events happens together,
\begin{align*}
\PP\{E_1\cap E_2\} = \PP\{E_1\}+\PP\{E_2\} -1 \geq \frac{2}{3}+ \frac{1}{2}-1>0.
\end{align*} 
This completes the proof.

\section{Proof of Theorem~\ref{thm: posterior-gen-gap}}
\label{sec: appendix-gen-bound}
Before we provide the upper bound for the Rademacher complexity of two-layer networks, we first need the following two lemmas.

\begin{lemma}[Lemma 26.11 of \cite{shalev2014understanding}]\label{lemma: l1-linear-class}
 Let $S=(\bx_1,\dots,\bx_n)$ be $n$ vectors in $\RR^d$. Then the Rademacher complexity of $\cH_1 = \{\bx\mapsto \bu\cdot\bx \ |\ \|\bu\|_1\leq 1 \}$ has the following upper bound,
\[
    \hat{\cR}_n(\cH_1) \leq \max_{i}\|\bx_i\|_{\infty} \sqrt{\frac{2\ln(2d)}{n}}
\]
\end{lemma}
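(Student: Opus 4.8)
The plan is to reduce the supremum over the $\ell_1$-ball to an expected maximum over finitely many coordinates, using $\ell_1$--$\ell_\infty$ duality, and then apply the standard sub-Gaussian maximal (Massart's finite-class) inequality. First I would write, for a fixed sign vector $\bm{\varepsilon}$,
\[
\sup_{\|\bu\|_1\le 1}\sum_{i=1}^n \varepsilon_i(\bu\cdot\bx_i) \;=\; \sup_{\|\bu\|_1\le 1}\bu\cdot\Big(\sum_{i=1}^n \varepsilon_i\bx_i\Big)\;=\;\Big\|\sum_{i=1}^n \varepsilon_i\bx_i\Big\|_\infty\;=\;\max_{1\le j\le d}\Big|\sum_{i=1}^n \varepsilon_i x_{i,j}\Big|,
\]
the middle equality being the fact that the support function of the $\ell_1$-ball is the $\ell_\infty$-norm (attained at a signed unit vector supported on the maximizing coordinate). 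Hence $\hat{\cR}_n(\cH_1)=\frac1n\EE_{\bm{\varepsilon}}\big[\max_{j}|\sum_i \varepsilon_i x_{i,j}|\big]$, and by writing $|t|=\max\{t,-t\}$ this is the expectation of the maximum of the $2d$ random variables $\pm\sum_i\varepsilon_i x_{i,j}$, $j=1,\dots,d$.

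Next I would control each of these $2d$ variables. For fixed $j$ and sign, $\sum_i \varepsilon_i x_{i,j}$ is a sum of independent mean-zero terms with $\varepsilon_i x_{i,j}\in[-|x_{i,j}|,|x_{i,j}|]$, so by Hoeffding's lemma each term is sub-Gaussian with variance proxy $x_{i,j}^2$, and the sum is sub-Gaussian with variance proxy $\sum_{i=1}^n x_{i,j}^2\le n\,\max_i\|\bx_i\|_\infty^2$. Then I would invoke the maximal inequality for (not necessarily independent) sub-Gaussian variables: if $Z_1,\dots,Z_N$ are each sub-Gaussian with variance proxy $s^2$, then $\EE[\max_k Z_k]\le s\sqrt{2\ln N}$, which follows from $\EE[\max_k Z_k]\le \tfrac1t\ln\sum_k\EE[e^{tZ_k}]\le \tfrac1t\ln(Ne^{t^2s^2/2})$ optimized over $t>0$. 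Applying this with $N=2d$ and $s^2=n\max_i\|\bx_i\|_\infty^2$ gives
\[
\EE_{\bm{\varepsilon}}\Big[\max_{1\le j\le d}\Big|\sum_{i=1}^n\varepsilon_i x_{i,j}\Big|\Big]\;\le\;\sqrt{n}\,\max_i\|\bx_i\|_\infty\,\sqrt{2\ln(2d)},
\]
and dividing by $n$ yields exactly $\hat{\cR}_n(\cH_1)\le \max_i\|\bx_i\|_\infty\sqrt{2\ln(2d)/n}$.

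All the steps here are routine; there is no genuine obstacle, and the only substantive ingredient is the sub-Gaussian maximal inequality combined with Hoeffding's lemma — this is what both produces the mild $\sqrt{\ln(2d)}$ dependence on the ambient dimension (rather than a $\sqrt d$ factor, which a naive norm comparison would give) and makes the resulting Rademacher bound useful in the high-dimensional regime relevant to the rest of the paper.
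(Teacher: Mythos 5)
Your proof is correct, and it is essentially the argument behind the result the paper simply cites from Shalev-Shwartz and Ben-David (Lemma 26.11) without reproving: reduce the supremum over the $\ell_1$-ball to the $\ell_\infty$-norm of $\sum_i \varepsilon_i \bx_i$, i.e.\ a maximum over the $2d$ signed coordinate directions, and then apply the sub-Gaussian maximal inequality (Massart's lemma) with variance proxy $n\max_i\|\bx_i\|_\infty^2$. Nothing to add; the steps, including the Hoeffding variance-proxy computation and the optimization over $t$, are sound.
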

The above lemma characterizes the Rademacher complexity of a linear predictor with $\ell_1$ norm bounded by $1$. To handle the influence of nonlinear activation function, we need the following contraction lemma. 
\begin{lemma}[Lemma 26.9 of \cite{shalev2014understanding}]\label{lemma: contraction-rademacher-complexity}
Let $\phi_i: \RR\mapsto\RR$ be a $\rho-$Lipschitz function, i.e. for all $\alpha,\beta\in\RR$ we have $|\phi_i(\alpha)-\phi_i(\beta)|\leq \rho|\alpha-\beta|$. For any $\bm{a}\in\RR^n$, let $\bm{\phi}(\bm{a})=(\phi_1(a_1),\dots,\phi_n(a_n))$, then we have
\[
    \hat{\cR}_n(\bm{\phi}\circ\cH) \leq \rho \hat{\cR}_n(\cH)
\]
\end{lemma}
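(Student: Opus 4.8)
The plan is to prove the statement by peeling off the contraction one coordinate at a time, which is the classical Ledoux--Talagrand argument. First I would reduce to the case $\rho=1$. Since $\hat{\cR}_n$ with respect to a fixed sample only depends on the set of evaluation vectors $\{(g(z_1),\dots,g(z_n))\}$, it is positively homogeneous: scaling every function of a class by a constant $c>0$ scales both the supremum and $\hat{\cR}_n$ by $c$. Replacing each $\phi_i$ by $\phi_i/\rho$ therefore makes it $1$-Lipschitz and peels off exactly the factor $\rho$ claimed, so it suffices to show $\hat{\cR}_n(\bm{\phi}\circ\cH)\leq \hat{\cR}_n(\cH)$ under the assumption that every $\phi_i$ is $1$-Lipschitz.

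The heart of the argument is a single-coordinate step that replaces $\phi_1$ by the identity in the first coordinate while leaving the other coordinates untouched. To carry it out I would condition on the signs $\varepsilon_2,\dots,\varepsilon_n$, abbreviate the resulting fixed sum by $u(h)=\sum_{i=2}^n \varepsilon_i\phi_i(h(z_i))$, and take the expectation over $\varepsilon_1$ alone. Writing the two-point average over $\varepsilon_1\in\{\pm1\}$ as a single supremum over a pair $h,h'\in\cH$ gives $\tfrac12\sup_{h,h'}\big[\phi_1(h(z_1))-\phi_1(h'(z_1))+u(h)+u(h')\big]$, and the Lipschitz bound $\phi_1(h(z_1))-\phi_1(h'(z_1))\leq |h(z_1)-h'(z_1)|$ upper-bounds this by $\tfrac12\sup_{h,h'}\big[|h(z_1)-h'(z_1)|+u(h)+u(h')\big]$.

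The step I expect to be the main obstacle, and the only genuinely delicate point, is removing the absolute value. Here I would exploit the symmetry of the bracketed expression under swapping $h$ and $h'$: since $|s|=\max(s,-s)$ and the two branches are exchanged by that swap, the supremum of $|h(z_1)-h'(z_1)|+u(h)+u(h')$ over the pair equals the supremum of $h(z_1)-h'(z_1)+u(h)+u(h')$. This last expression decouples as $\sup_h[h(z_1)+u(h)]+\sup_{h'}[-h'(z_1)+u(h')]$, and halving it recovers exactly $\EE_{\varepsilon_1}[\sup_h \varepsilon_1 h(z_1)+u(h)]$. Thus $\phi_1$ has been replaced by the identity with no loss, and averaging back over $\varepsilon_2,\dots,\varepsilon_n$ completes the single-coordinate step.

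Finally I would iterate: applying the step to coordinate $2$, then $3$, and so on. At each stage the already-processed coordinates (now carrying either the identity or the original $\phi_i$) are absorbed into the new ``remaining'' term $u(\cdot)$, so the hypotheses of the step are unchanged and only $1$-Lipschitzness of the coordinate being processed is used. After $n$ applications every $\phi_i$ has been turned into the identity, yielding $\hat{\cR}_n(\bm{\phi}\circ\cH)\leq \hat{\cR}_n(\cH)$; undoing the normalization of the first paragraph gives the stated bound $\hat{\cR}_n(\bm{\phi}\circ\cH)\leq \rho\,\hat{\cR}_n(\cH)$.
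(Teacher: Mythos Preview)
Your argument is correct and is exactly the standard Ledoux--Talagrand coordinate-peeling proof: normalize to $\rho=1$, condition on all but one Rademacher sign, rewrite the two-point expectation as a paired supremum, apply the Lipschitz bound, use the swap symmetry in $(h,h')$ to drop the absolute value, decouple, and iterate.

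There is nothing to compare against in the paper itself, however: the lemma is quoted from \cite{shalev2014understanding} (Lemma~26.9 there) and is not re-proved here---it is invoked as a black box in the proofs of Lemma~\ref{lemma: Rademacher} and Proposition~\ref{pro: fixed-hypothesis}. Your proof is precisely the one given in that reference, so in that sense the approach matches.
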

We are now ready to  estimate the Rademacher complexity of two-layer networks. 

\begin{lemma}\label{lemma: Rademacher}
Let $\cF_Q = \{ f_m(x;\theta)\,|\, \|\theta\|_{\cP}\leq Q\}$ be the set of two-layer networks with path norm bounded by $Q$, then we have
\[
    \hat{\cR}_n(\cF_Q) \leq 2Q \sqrt{\frac{2\ln(2d)}{n}}
\]
\end{lemma}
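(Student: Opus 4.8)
The plan is to peel off the structure of a two-layer network one layer at a time, using the two auxiliary lemmas already stated. First I would write a network $f(x;\theta)=\sum_{k=1}^m a_k\sigma(w_k^Tx)$ with $\|\theta\|_{\cP}=\sum_k|a_k|\|w_k\|_1\le Q$, and normalize: set $\bar w_k = w_k/\|w_k\|_1\in\SS^d$ and $c_k = a_k\|w_k\|_1$, so that $f(x;\theta)=\sum_k c_k\,\sigma(\bar w_k^Tx)$ (using scale-invariance $\sigma(\|w_k\|_1\,\bar w_k^Tx)=\|w_k\|_1\sigma(\bar w_k^Tx)$) and $\sum_k|c_k|\le Q$. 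Thus $\cF_Q$ is contained in the set of functions of the form $\sum_k c_k\,\sigma(u_k^Tx)$ with $\|u_k\|_1\le 1$ and $\sum_k|c_k|\le Q$. Pulling the constant $Q$ out, it suffices to bound $\hat\cR_n$ of $\{\,\sum_k c_k\,\sigma(u_k^Tx) : \|u_k\|_1\le1,\ \sum_k|c_k|\le 1\,\}$, which is the $Q=1$ absolute convex hull of $G:=\sigma\circ\cH_1$, where $\cH_1=\{x\mapsto u\cdot x : \|u\|_1\le1\}$ as in Lemma~\ref{lemma: l1-linear-class}.

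Next I would use two standard facts about Rademacher complexity: (i) taking the (symmetric) convex hull does not increase the Rademacher complexity, so $\hat\cR_n(\mathrm{conv}^{\pm}(G))=\hat\cR_n(G)$; and (ii) $\hat\cR_n(G\cup(-G))\le 2\hat\cR_n(G)$ — or, handled slightly differently, the absolute value / symmetrization costs at most a factor $2$. Concretely: $\hat\cR_n(\cF_Q)\le Q\,\hat\cR_n(\mathrm{conv}^{\pm}(G)) = Q\,\hat\cR_n(G^{\pm})\le 2Q\,\hat\cR_n(\sigma\circ\cH_1)$. Then apply the contraction Lemma~\ref{lemma: contraction-rademacher-complexity} with the $1$-Lipschitz activation $\sigma$ (recall the paper assumes $\sigma$ is $1$-Lipschitz) to get $\hat\cR_n(\sigma\circ\cH_1)\le \hat\cR_n(\cH_1)$. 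Finally, Lemma~\ref{lemma: l1-linear-class} gives $\hat\cR_n(\cH_1)\le \max_i\|x_i\|_{\infty}\sqrt{2\ln(2d)/n}\le\sqrt{2\ln(2d)/n}$, since $x_i\in\cX=[-1,1]^d$ so $\|x_i\|_{\infty}\le1$. Chaining these inequalities yields $\hat\cR_n(\cF_Q)\le 2Q\sqrt{2\ln(2d)/n}$.

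The one delicate point — and the main thing to get right — is the factor of $2$ and the convex-hull / symmetrization step: one must be careful that $\cF_Q$ really lies in the \emph{symmetric} convex hull of $\sigma\circ\cH_1$ scaled by $Q$ (the signs of the $c_k$ can be absorbed, but only after accounting for $\pm G$), and that the identity $\hat\cR_n(\mathrm{conv}(A))=\hat\cR_n(A)$ is applied to the \emph{absolute} convex hull. An alternative, perhaps cleaner, route that avoids invoking the convex-hull identity: bound the supremum over $\theta$ directly via Hölder — $\sum_i\varepsilon_i f(x_i;\theta)=\sum_k c_k\big(\sum_i\varepsilon_i\sigma(u_k^Tx_i)\big)\le Q\sup_{\|u\|_1\le1}\big|\sum_i\varepsilon_i\sigma(u^Tx_i)\big|$ — and then note $\EE_{\varepsilon}\sup_{\|u\|_1\le1}|\sum_i\varepsilon_i\sigma(u^Tx_i)| \le 2\,\EE_{\varepsilon}\sup_{\|u\|_1\le1}\sum_i\varepsilon_i\sigma(u^Tx_i) = 2n\,\hat\cR_n(\sigma\circ\cH_1)$, after which contraction and Lemma~\ref{lemma: l1-linear-class} finish as before. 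Either way the estimates are routine; the only place to exercise care is the constant.
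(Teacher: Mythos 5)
Your proposal is correct, and your ``alternative route'' is exactly the paper's proof: pull out the path norm $Q$ by a H\"older-type step, remove the absolute value by a factor-$2$ symmetrization (valid here since $\sigma(0)=0$, so the supremum over $\|\bu\|_1\le 1$ is nonnegative), then apply the contraction lemma and the $\ell_1$ linear-class bound with $\max_i\|\bx_i\|_\infty\le 1$. Your primary convex-hull route is only a repackaging of the same steps---the identity $\hat{\cR}_n(\mathrm{conv}(A))=\hat{\cR}_n(A)$ and the bound $\hat{\cR}_n(G\cup(-G))\le 2\hat{\cR}_n(G)$ are established by precisely the H\"older and symmetrization computations the paper writes out---so either version yields the stated constant $2Q\sqrt{2\ln(2d)/n}$.
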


\begin{proof}
To simplify the proof, we let $c_k=0$, otherwise we can define $\bb_k = (\bb_k^T,c_k)^T$ and $\bx = (\bx^T,1)^T$. 
\begin{align*}
n \hat{\cR}_n(\cF_Q) &= \EE_{\xi} \big[\sup_{\|\theta\|_{\cP}\leq Q} \sum_{i=1}^n \xi_i \sum_{k=1}^m a_k \|\bb_k\|_1 \sigma(\hat{\bb}_k^T\bx_i) \big]\\
& \leq  \EE_{\xi}\big[\sup_{\|\theta\|_{\cP}\leq Q, \|\bu_k\|_1=1} \sum_{i=1}^n \xi_i \sum_{k=1}^m a_k \|\bb_k\|_1 \sigma(\bu_k^T\bx_i)\big] \\
&= \EE_{\xi}\big[\sup_{\|\theta\|_{\cP}\leq Q, \|\bu_k\|_1=1} \sum_{k=1}^m a_k \|\bb_k\|_1 \sum_{i=1}^n \xi_i \sigma(\bu_k^T\bx_i)\big] \\
&\leq \EE_{\xi}\big[\sup_{\|\theta\|_{\cP}\leq Q} \sum_{k=1}^m |a_k \|\bb_k\|_1| \sup_{\|\bu\|_1=1}|\sum_{i=1}^n \xi_i \sigma(\bu^T\bx_i)|\big]\\
&\leq Q\EE_{\xi}\big[\sup_{\|\bu\|_1=1}|\sum_{i=1}^n \xi_i \sigma(\bu^T\bx_i)|\big] \\
&\leq Q\EE_{\xi}\big[\sup_{\|\bu\|_1\leq 1}|\sum_{i=1}^n \xi_i \sigma(\bu^T\bx_i)|\big] 
\end{align*}
Due to the symmetry, we have that 
\begin{align*}
\EE_{\xi}\big[\sup_{\|\bu\|_1\leq 1}|\sum_{i=1}^n \xi_i \sigma(\bu^T\bx_i)|\big] &\leq   \EE_{\xi}\big[\sup_{\|\bu\|_1\leq 1}\sum_{i=1}^n \xi_i \sigma(\bu^T\bx_i) + \sup_{\|\bu\|_1\leq 1} \sum_{i=1}^n - \xi_i \sigma(\bu^T\bx_i)\big]\\
&= 2\EE_{\xi}\big[\sup_{\|\bu\|_1\leq 1}\sum_{i=1}^n \xi_i \sigma(\bu^T\bx_i)\big]
\end{align*}
Since $\sigma$ is  Lipschitz continuous with Lipschitz constant  $1$, by applying Lemma~\ref{lemma: contraction-rademacher-complexity} and Lemma~\ref{lemma: l1-linear-class}, we obtain
\[
    \hat{\cR}_n(\cF_Q) \leq 2Q \sqrt{\frac{2\ln(2d)}{n}}.
\]
\end{proof}

\begin{proposition}\label{pro: fixed-hypothesis}
Assume the loss function $\ell(\cdot,y)$ is $A-$Lipschitz continuous and bounded by $B$, then with probability at least $1-\delta$ we have,
\begin{equation}\label{eqn: path-norm-gen-bound}
    \sup_{\|\theta\|_{\cP}\leq Q} |L(\theta)-\hat{L}_n(\theta)| \leq 4 A Q\sqrt{\frac{2\ln(2d)}{n}} +  B \sqrt{\frac{2\ln(2/\delta)}{n}}
\end{equation}
\end{proposition}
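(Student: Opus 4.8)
The plan is to combine the uniform bound over the fixed-norm hypothesis class $\cF_Q$ (Lemma~\ref{lemma: Rademacher}) with Theorem~\ref{thm: gen-err-rademacher-complexity}, and then to upgrade the norm bound to a statement that holds simultaneously for \emph{all} $\theta$ by a union bound over a dyadic sequence of norm levels. First I would fix $Q>0$ and apply Theorem~\ref{thm: gen-err-rademacher-complexity} to the loss class $\{(x,y)\mapsto \ell(f(x;\theta),y) : \|\theta\|_{\cP}\le Q\}$. Since $\ell(\cdot,y)$ is $A$-Lipschitz, the contraction lemma (Lemma~\ref{lemma: contraction-rademacher-complexity}) gives that the Rademacher complexity of this loss class is at most $A\,\hat{\cR}_n(\cF_Q)\le 2AQ\sqrt{2\ln(2d)/n}$; the bounded-by-$B$ hypothesis implies the loss is bounded by $B$, so Theorem~\ref{thm: gen-err-rademacher-complexity} yields, with probability at least $1-\delta$,
\[
\sup_{\|\theta\|_{\cP}\le Q}|L(\theta)-\hat L_n(\theta)| \le 4AQ\sqrt{\frac{2\ln(2d)}{n}} + B\sqrt{\frac{2\ln(2/\delta)}{n}}.
\]
This is exactly the claimed inequality~\eqref{eqn: path-norm-gen-bound} for a single prescribed $Q$.

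There is one subtlety worth flagging: the statement of Proposition~\ref{pro: fixed-hypothesis} as written is for a fixed $Q$, but Theorem~\ref{thm: posterior-gen-gap} (which is what we ultimately want) needs a bound valid for every $\theta$ with the radius replaced by $\|\theta\|_{\cP}+1$. So I would next peel off the proposition into that ``all $\theta$'' form by a stratification argument: for each integer $k\ge 1$, apply Proposition~\ref{pro: fixed-hypothesis} with $Q=Q_k:=k$ and failure probability $\delta_k:=\delta/(c k^2)$ where $c=\sum_{k\ge1}1/k^2$, so that $\sum_k\delta_k=\delta$. On the intersection of these good events (which has probability at least $1-\delta$ by the union bound), any $\theta$ with $\|\theta\|_{\cP}\le k$ satisfies
\[
|L(\theta)-\hat L_n(\theta)| \le 4Ak\sqrt{\frac{2\ln(2d)}{n}} + B\sqrt{\frac{2\ln(2c k^2/\delta)}{n}}.
\]
Choosing $k=\lceil\|\theta\|_{\cP}\rceil\le \|\theta\|_{\cP}+1$ and using $k\le\|\theta\|_{\cP}+1$ in both terms gives the bound in Theorem~\ref{thm: posterior-gen-gap}.

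The only place that needs care — and the main obstacle, such as it is — is verifying the Rademacher-complexity input to Theorem~\ref{thm: gen-err-rademacher-complexity}: the theorem is stated for a function class applied to the sample points $z_i=(x_i,y_i)$, so one must observe that composing with the Lipschitz loss and invoking Lemma~\ref{lemma: contraction-rademacher-complexity} (applied coordinatewise with $\phi_i(t)=\ell(t,y_i)$) reduces the Rademacher complexity of the loss class to $A$ times that of $\cF_Q$, and then Lemma~\ref{lemma: Rademacher} finishes it. Everything else is bookkeeping with constants. I would present the single-$Q$ version (Proposition~\ref{pro: fixed-hypothesis}) as the immediate consequence of Theorem~\ref{thm: gen-err-rademacher-complexity} plus the two lemmas, and relegate the dyadic union bound to the proof of Theorem~\ref{thm: posterior-gen-gap} itself.
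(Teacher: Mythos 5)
Your argument is correct and is essentially the paper's own proof: compose the loss with $\cF_Q$, apply the contraction lemma with the $A$-Lipschitz property to bound the Rademacher complexity of the loss class by $2AQ\sqrt{2\ln(2d)/n}$, and invoke Theorem~\ref{thm: gen-err-rademacher-complexity} with the bound $B$ (the paper's displayed $2BQ$ is a constant typo for $2AQ$). Your observation that the stratification over integer norm levels belongs to the proof of Theorem~\ref{thm: posterior-gen-gap}, not to this proposition, also matches the paper's organization.
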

\begin{proof}
Define $\cH_Q = \left\{\ell\circ f \,|\, f\in \cF_Q\right\}$, then we have
    $
    \hat{\cR}_n(\cH_Q) \leq 2BQ\sqrt{\frac{2\ln(2d)}{n}},
    $
which follows from  Lemma~\ref{lemma: contraction-rademacher-complexity} and \ref{lemma: Rademacher}. Then directly applying Theorem~\ref{thm: gen-err-rademacher-complexity} yields the result.
\end{proof}
\begin{proof}({\bf Proof of Theorem~\ref{thm: posterior-gen-gap}}) 
Consider the decomposition  $\cF = \cup_{l=1}^{\infty} \cF_l $, where
 $\cF_l = \{f_m(\bx;\theta)\, |\, \|\theta\|_{\cP}\leq l\}$.
 Let $\delta_l = \frac{\delta}{c\, l^2}$ where $c = \sum_{l=1}^{\infty} \frac{1}{l^2}$. According to Theorem~\ref{pro: fixed-hypothesis}, if we fix $l$ in advance, then with probability at least $1-\delta_l$ over the choice of $S$, we have
\begin{equation}
    \sup_{\|\theta\|_{\cP}\leq l} |L(\theta)-\hat{L}_n(\theta)| \leq 4 A l \sqrt{\frac{2\ln(2d)}{n}} +  B \sqrt{\frac{2\ln(2/\delta_l)}{n}}.
\label{eqn: fixed-l}
\end{equation}
So the probability that there exists at least one $l$ such that \eqref{eqn: fixed-l} fails is at most $\sum_{l=1}^{\infty} \delta_l = \delta$. In other words, with probability at least $1-\delta$, the  inequality~\eqref{eqn: fixed-l} holds for all $l$.

Given an arbitrary set of parameters $\theta$, denote $l_0 = \min\{ l \,|\,\|\theta\|_{\cP}\leq l\}$, 
then  $l_0 \leq \|\theta\|_{\cP}+1$. Equation~\eqref{eqn: fixed-l} implies that
\begin{align*}
|L(\theta)-\hat{L}_n(\theta)| &\leq 4 A l_0 \sqrt{\frac{2\ln(2d)}{n}} +  B \sqrt{\frac{2\ln(2cl_0^2/\delta)}{n}} \\
&\leq 4 A (\|\theta\|_{\cP}+1) \sqrt{\frac{2\ln(2d)}{n}} + B  \sqrt{\frac{2\ln(2c(1+\|\theta\|_{\cP})^2/\delta)}{n}}.
\end{align*}
\end{proof}

\section{Proof of Lemma~\ref{lemma: noise}}
\label{sec: appendix-truncated-risk}
\begin{proof}
Let $Z=f(x;\theta)-f^*(x)-\varepsilon$, then  for any $B\geq 2+\tau_0$, we have
\begin{align*}
|L(\theta) - L_B(\theta)| &= \EE\left[ (Z^2 - B^2)\bm{1}_{|Z|\geq B}  \right] \\
& = \int_{0}^{\infty}\PP\{Z^2-B^2\geq t^2\} d t^2 \leq \int_0^{\infty} \PP\{|Z|\geq \sqrt{B^2+t^2}\}d t^2\\
&\leq \int_{0}^{\infty}\PP\{|\varepsilon|\geq \sqrt{B^2+t^2}-2\} dt^2\\
&= c_0 \int_{B}^{\infty} e^{-\frac{s^2}{2\sigma^2}} ds^2 
= 2c_0\sigma^2  e^{-B^2/2\sigma^2}
\end{align*}
Since $B_n\geq \sigma^2 \ln n$, we have $2c_0 \sigma^2  e^{-\frac{B^2_n}{2\sigma^2}}\leq 2c_0\sigma^2 n^{-1/2}$. We thus complete the proof.
\end{proof}

\vspace*{3mm}

{\bf Acknowledgement:}
The work presented here is supported in part by a gift to Princeton University from iFlytek
and the ONR grant N00014-13-1-0338.

\bibliographystyle{plain}
\bibliography{deeplearning}

\end{document}